\newcommand{\distas}[1]{\mathbin{\overset{#1}{\kern\z@\sim}}}%
\newtheorem{theorem}{Theorem}
\newtheorem{definition}{Definition}
\newtheorem{remark}{Remark}
\newcommand{\bas}[1]{\begin{align*}#1\end{align*}}
\newcommand{\ba}[1]{\begin{align}#1\end{align}}
\newcommand{\bR}{\mathbb{R}}
\newcommand{\cL}{\mathcal{L}}
\newcommand{\cN}{\mathcal{N}}
\newcommand{\kl}{\text{KL}}
\newcommand{\beqs}{\vspace{0mm}\begin{eqnarray}}
\newcommand{\eeqs}{\vspace{0mm}\end{eqnarray}}
\newcommand{\barr}{\begin{array}}
\newcommand{\earr}{\end{array}}
\newcommand{\Wmat}[0]{{{\bf W}}}
\newcommand{\xv}{\boldsymbol{x}}
\newcommand{\yv}{\boldsymbol{y}}
\newcommand{\zv}{\boldsymbol{z}}
\newcommand{\E}{\mathbb{E}}
\newcommand{\hy}{\hat{y}}
\newcommand{\task}{\mathcal{T}}
\newcommand{\data}{\mathcal{D}}
\newcommand{\metadata}{\mathcal{M}}
\newcommand{\specialcell}[2][c]{%
  \begin{tabular}[#1]{@{}c@{}}#2\end{tabular}}
\newtheorem*{theorem*}{Theorem}
\newcommand\blfootnote[1]{%
  \begingroup
  \renewcommand\thefootnote{}\footnote{#1}%
  \addtocounter{footnote}{-1}%
  \endgroup
}
\title{Meta-Learning without Memorization}
\author{Mingzhang Yin$^{12}$, George Tucker$^2$, Mingyuan Zhou$^1$, Sergey Levine$^{23}$, Chelsea Finn$^{24}$ \\
\texttt{mzyin@utexas.edu, gjt@google.com, mingyuan.zhou@mccombs.utexas.edu} \\
\texttt{svlevine@eecs.berkeley.edu, cbfinn@cs.stanford.edu} \\
$^1$UT Austin, $^2$Google Research, Brain team, $^3$UC Berkeley, $^4$Stanford
}
\begin{document}
\titlespacing\section{0pt}{2pt plus 2pt minus 2pt}{0pt plus 2pt minus 2pt}
\titlespacing\subsection{0pt}{2pt plus 4pt minus 2pt}{0pt plus 2pt minus 2pt}

\maketitle

\begin{abstract}
The ability to learn new concepts with small amounts of data is a critical aspect of intelligence that has proven challenging for deep learning methods. Meta-learning has emerged as a promising technique for leveraging data from previous tasks to enable efficient learning of new tasks. However, most meta-learning algorithms implicitly %
require that the meta-training tasks be \emph{mutually-exclusive}, such that no single model can solve all of the tasks at once. For example, when creating tasks for few-shot image classification, prior work uses a per-task random assignment of image classes to N-way classification labels.
If this is not done, the meta-learner can ignore the task training data and learn a single model that performs all of the meta-training tasks zero-shot, but does not adapt effectively to new image classes. %
This requirement means that the user must take great care in designing the tasks, for example by shuffling labels or removing task identifying information from the inputs. %
In some domains, this makes meta-learning entirely inapplicable. In this paper, we address this challenge by designing a meta-regularization objective using information theory that places precedence on data-driven adaptation. This causes the meta-learner to decide what must be learned from the task training data and what should be inferred from the task testing input. By doing so, our algorithm can successfully use data from \emph{non-mutually-exclusive} tasks to efficiently adapt to novel tasks. %
We demonstrate its applicability to both contextual and gradient-based meta-learning algorithms, and apply it in practical settings where applying standard meta-learning has been difficult. %
Our approach substantially outperforms standard meta-learning algorithms in these settings.\blfootnote{Implementation and examples available here: \url{https://github.com/google-research/google-research/tree/master/meta_learning_without_memorization}.}
\end{abstract}

\section{Introduction}
The ability to learn new concepts and skills with small amounts of data is a critical aspect of intelligence that many machine learning systems lack. Meta-learning~\citep{schmidhuber1987evolutionary} has emerged as a promising approach for enabling systems to quickly learn new tasks by building upon experience from previous related tasks~\citep{thrun2012learning, koch2015siamese, santoro2016meta, ravi2016optimization, finn2017model}. Meta-learning accomplishes this by explicitly optimizing for few-shot generalization across a set of meta-training tasks. The meta-learner is trained such that, after being presented with a small task training set, it can accurately make predictions on test datapoints for that meta-training task. 

While these methods have shown promising results, current methods require careful design of the meta-training tasks to prevent a subtle form of \emph{task overfitting}, distinct from standard overfitting in supervised learning. If the task can be accurately inferred from the test input alone, then the task training data can be ignored while still achieving low meta-training loss. In effect, the model will collapse to one that makes zero-shot decisions. This presents an opportunity for overfitting where the meta-learner generalizes on meta-training tasks, but fails to adapt when presented with training data from novel tasks. We call this form of overfitting the \emph{memorization problem} in meta-learning because the meta-learner memorizes a function that solves all of the meta-training tasks, rather than learning to adapt.

Existing meta-learning algorithms implicitly resolve this problem by carefully designing the meta-training tasks such that no single model can solve all tasks zero-shot; we call tasks constructed in this way \emph{mutually-exclusive}. For example, for $N$-way classification, each task consists of examples from $N$ randomly sampled classes. The $N$ classes are labeled from $1$ to $N$, and critically, for each task, we \emph{randomize} the assignment of classes to labels $\{1,2,\ldots,N\}$ (visualized in Appendix Figure~\ref{fig:demo_me}). This ensures that the task-specific class-to-label assignment cannot be inferred from a test input alone. However, the mutually-exclusive tasks requirement places a substantial burden on the user to cleverly design the meta-training setup (e.g., by shuffling labels or omitting goal information). While shuffling labels provides a reasonable mechanism to force tasks to be mutually-exclusive with standard few-shot image classification datasets such as MiniImageNet~\citep{ravi2016optimization}, this solution cannot be applied to all domains where we would like to utilize meta-learning. For example, consider meta-learning a pose predictor that can adapt to different objects: even if $N$ different objects are used for meta-training, a powerful model can simply learn to ignore the training set for each task, and directly learn to predict the pose of each of the $N$ objects. However, such a model would not be able to adapt to \emph{new} objects at meta-test time.

The primary contributions of this work are: 1) to identify and formalize the memorization problem in meta-learning, and 2) to propose a meta-regularizer (MR) using information theory as a general approach for mitigating this problem \emph{without} placing restrictions on the task distribution. We formally differentiate the meta-learning memorization problem from overfitting problem in conventional supervised learning, and empirically show that na\"{i}ve applications of standard regularization techniques do not solve the memorization problem in meta-learning. The key insight of our meta-regularization approach is that the model acquired when memorizing tasks is more complex than the model that results from task-specific adaptation because the memorization model is a single model that simultaneously performs well on all tasks. It needs to contain all information in its weights needed to do well on test points without looking at training points. Therefore we would expect the information content of the weights of a memorization model to be larger, and hence the model should be more complex. As a result, we propose an objective that regularizes the information complexity of the meta-learned function class (motivated by~\citet{alemi2016deep,achille2018emergence}). Furthermore, we show that meta-regularization in MAML can be rigorously motivated by a PAC-Bayes bound on generalization. In a series of experiments on non-mutually-exclusive task distributions entailing both few-shot regression and classification, we find that memorization poses a significant challenge for both gradient-based~\citep{finn2017model} and contextual~\citep{garnelo2018conditional} meta-learning methods, resulting in near random performance on test tasks in some cases. Our meta-regularization approach enables both of these methods to achieve efficient adaptation and generalization, leading to substantial performance gains across the board on non-mutually-exclusive tasks.

\section{Preliminaries}
\sloppy
We focus on the standard supervised meta-learning problem (see, e.g., \citet{finn2017model}). Briefly, we assume tasks $\task_i$ are sampled from a task distribution $p(\task)$. During meta-training, for each task, we observe a set of training data $\data_i = (\xv_i, \yv_i)$ and a set of test data $\data^*_i = (\xv^*_i, \yv^*_i)$ with \mbox{$\xv_i = (x_{i1}, \ldots, x_{iK}), \allowbreak  \yv_i =(y_{i1}, \ldots, y_{iK})$} sampled from $p(x, y | \task_i)$, and similarly for $\data_i^*$.  We denote the entire meta-training set as $\metadata = \{\data_i, \data^*_i\}_{i=1}^{N}$. The goal of meta-training is to learn a model for a new task $\task$ by leveraging what is learned during meta-training and a small amount of training data for the new task $\data$. We use $\theta$ to denote the meta-parameters learned during meta-training and use $\phi$ to denote the task-specific parameters that are computed based on the task training data. 

Following~\citet{grant2018recasting,gordon2018meta}, given a meta-training set $\metadata$, we consider meta-learning algorithms that maximize conditional likelihood $q(\hy^*=y^* | x^* ,\theta, \data)$, which is composed of three distributions: $q(\theta |\metadata)$ that summarizes meta-training data into a distribution on meta-parameters, $q(\phi | \data, \theta)$ that summarizes the per-task training set into a distribution on task-specific parameters, and $q(\hy^* | x^*, \phi, \theta)$ that is the predictive distribution. These distributions are learned to minimize
\begin{equation}
\textstyle
   -\frac{1}{N}\sum_i \E_{q(\theta | \metadata)q(\phi | \data_i, \theta)}\left[ \frac{1}{K}\sum_{(x^*, y^*) \in \data_i^*}\log q(\hy^* = y^* | x^*, \phi, \theta) \right].
\label{eq:meta-learn}
\end{equation}

For example, in MAML~\citep{finn2017model}, $\theta$ and $\phi$ are the weights of a predictor network, %
$q(\theta|\metadata)$ is a delta function learned over the meta-training data, $q(\phi|\data,\theta)$ is a delta function centered at a point defined by gradient optimization, and $\phi$ parameterizes the  predictor network $q(\hy^* | x^*, \phi)$~\citep{grant2018recasting}. In particular, to determine the task-specific parameters $\phi$, the task training data $\data$ and $\theta$ are used in the predictor model $\phi =  \theta + \frac{\alpha}{K} \sum_{(x, y) \in \data} \nabla_{\theta} \log q( y | x, \phi=\theta).$

Another family of meta-learning algorithms are contextual methods~\citep{santoro2016meta}, such as conditional neural processes (CNP)~\citep{garnelo2018neural, garnelo2018conditional}. CNP instead defines $q(\phi|\data,\theta)$ as a mapping from $\data$ to a summary statistic $\phi$ (parameterized by $\theta$). In particular, $\phi = a_\theta \circ h_\theta(\data)$ is the output of an aggregator $a_{\theta}(\cdot)$ applied to features $h_{\theta}(\data)$ extracted from the task training data. Then $\theta$ parameterizes a predictor network that takes $\phi$ and $x^*$ as input and produces a predictive distribution $q(\hy^* | x^*, \phi, \theta)$.

In the following sections, we describe a common pitfall for a variety of meta-learning algorithms, including MAML and CNP, and a general meta-regularization approach to prevent this pitfall.

\section{The Memorization Problem in Meta-Learning}
The ideal meta-learning algorithm will learn in such a way that generalizes to novel tasks. However, we find that unless tasks are carefully designed, current meta-learning algorithms can overfit to the tasks and end up ignoring the task training data (i.e., either $q(\phi | \data, \theta)$ does not depend on $\data$ or $q(\hy^* | x^*, \phi, \theta)$ does not depend on $\phi$, as shown in Figure~\ref{fig:demo1}), which can lead to poor generalization. This memorization phenomenon is best understood through examples.

Consider a 3D object pose prediction problem (illustrated in Figure~\ref{fig:demo1}), where each object has a fixed canonical pose. The $(x, y)$ pairs for the task are 2D grey-scale images of the rotated object $(x)$ and the rotation angle relative to the fixed canonical pose for that object $(y)$. 
In the most extreme case, for an unseen object, the task is impossible without using $\data$ because the canonical pose for the unseen object is unknown. The number of objects in the meta-training dataset is small, so it is straightforward for a single network to memorize the canonical pose for each training object and to infer the object from the input image (i.e., task overfitting), thus achieving a low training error without using $\data$. However, by construction, this solution will necessarily have poor generalization to test tasks with unseen objects. 

As another example, imagine an automated medical prescription system that suggests medication prescriptions to doctors based on patient symptoms and the patient's previous record of prescription responses (i.e., medical history) for adaptation. In the meta-learning framework, each patient represents a separate task. Here, the symptoms and prescriptions have a close relationship, so we {\it cannot} assign random prescriptions to symptoms, in contrast to the classification tasks where we {\it can} randomly shuffle the labels to create mutually-exclusiveness. For this non-mutually-exclusive task distribution,
a standard meta-learning system can memorize the patients' identity information in the training, leading it to ignore the medical history and only utilize the symptoms combined with the memorized information. As a result, it may issue highly accurate prescriptions on the \emph{meta-training} set, but fail to adapt to new patients effectively. While such a system would achieve a baseline level of accuracy for new patients, it would be no better than a standard supervised learning method applied to the pooled data.

We formally define (complete) memorization as:

\begin{definition}[Complete Meta-Learning Memorization]
Complete memorization in meta-learning is when the learned model ignores the task training data such that $I(\hy^* ; \data | x^*, \theta)=0$ (i.e., $q(\hy^* | x^*, \theta, \data) = q(\hy^* | x^*, \theta)= \E_{\data' | x^*} \left[ q(\hy^* | x^*, \theta, \data') \right]$).
\label{def:memo}
\end{definition}

Memorization describes an issue with overfitting the meta-training tasks, but it does not preclude the network from generalizing to unseen $(x, y)$ pairs on the tasks similar to the training tasks. Memorization becomes an undesired problem for generalization to new tasks when $I(y^*; \data | x^*) \gg I(\hy^* ; \data | x^*, \theta)$ (i.e., the task training data is necessary to achieve good performance, even with exact inference under the data generating distribution, to make accurate predictions).

A model with the memorization problem may generalize to new datapoints in training tasks but cannot generalize to novel tasks, which distinguishes it from typical overfitting in supervised learning. In practice, we find that MAML and CNP frequently converge to this memorization solution (Table~\ref{tab:pose}).  For MAML, memorization can occur when a particular setting of $\theta$ that does not adapt to the task training data can achieve comparable meta-training error to a solution that adapts $\theta$. For example, if a setting of $\theta$ can solve all of the meta-training tasks (i.e., for all $(x, y)$ in $\data$ and $\data^*$ the predictive error is close to zero), the optimization may converge to a stationary point of the MAML objective where minimal adaptation occurs based on the task training set (i.e., $\phi \approx \theta$). For a novel task where it is necessary to use the task training data, MAML can in principle still leverage the task training data because the adaptation step is based on gradient descent. However, in practice, the poor initialization of $\theta$ can affect the model's ability to generalize from a small mount of data. For CNP, memorization can occur when the predictive distribution network $q(\hy^* | x^*, \phi, \theta)$ can achieve low training error without using the task training summary statistics $\phi$. On a novel task, the network is not trained to use $\phi$, so it is unable to use the information extracted from the task training set to effectively generalize.

In some problem domains, the memorization problem can be avoided by carefully constructing the tasks. For example, for $N$-way classification, each task consists of examples from $N$ randomly sampled classes.  If the classes are assigned to a random permutation of $N$ for each task, this ensures that the task-specific class-to-label assignment  cannot be inferred from the test inputs alone. As a result, a model that ignores the task training data cannot achieve low training error, preventing convergence to the memorization problem. We refer to tasks constructed in this way as \emph{mutually-exclusive}. However, the mutually-exclusive tasks requirement places a substantial burden on the user to cleverly design the meta-training setup (e.g., by shuffling labels or omitting goal information) and cannot be applied to all domains where we would like to utilize meta-learning.

\begin{figure}[ht]
\begin{tabular}{cc}
\includegraphics[width=0.59\textwidth]{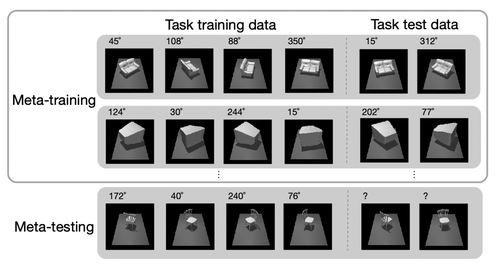} &
\includegraphics[width=0.33\textwidth]{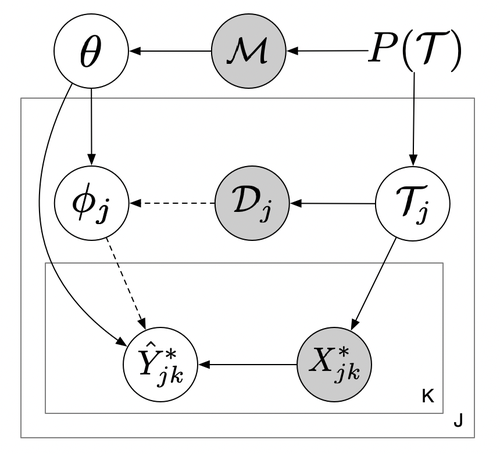} 
\end{tabular}
\caption{\footnotesize Left: An example of non-mutually-exclusive pose prediction tasks, which may lead to the memorization problem. The training tasks are non-mutually-exclusive because the test data label (right) can be inferred accurately without using task training data (left) in the training tasks, by memorizing the canonical orientation of the meta-training objects. For a new object and canonical orientation (bottom), the task cannot be solved without using task training data (bottom left) to infer the canonical orientation. Right: Graphical model for meta-learning. Observed variables are shaded. Without either one of the dashed arrows, $\hat{Y}^*$ is conditionally independent of $\mathcal{D}$ given $\theta$ and $X^*$, which we refer to as complete memorization (Definition~\ref{def:memo}).
}
\label{fig:demo1}
\end{figure}

\section{Meta Regularization Using Information Theory}
At a high level, the sources of information in the predictive distribution $q(\hy^* | x^*, \theta, \data)$ come from the input, the meta-parameters, and the data. The memorization problem occurs when the model encodes task information in the predictive network that is readily available from the task training set (i.e., it memorizes the task information for each meta-training task). We could resolve this problem by encouraging the model to minimize the training error and to rely on the task training dataset as much as possible for the prediction of $y^*$ (i.e., to maximize $I(\hy^*; \data | x^*, \theta)$). Explicitly maximizing $I(\hy^* ; \data | x^*, \theta)$ requires an intractable marginalization over task training sets to compute $q(\hy^* | x^*, \theta)$. Instead, we can implicitly encourage it by restricting the information flow from other sources ($x^*$ and $\theta$) to $\hy^*$. %
To achieve both low error and low mutual information between $\hat{y}^*$ and $(x^*, \theta)$, the model must use task training data $\data$ to make predictions, hence increasing the mutual information $I(\hy^* ; \data | x^*, \theta)$, leading to reduced memorization. In this section, we describe two tractable ways to achieve this.

\subsection{Meta Regularization on Activations}
Given $\theta$, the statistical dependency between $x^*$ and $\hat{y}^*$ is controlled by the direct path from $x^*$ to $\hat{y}^*$ and the indirect path through $\mathcal{D}$ (see Figure~\ref{fig:demo1}), where the latter is desirable because it leverages the task training data.  We can control the information flow between $x^*$ and $\hat{y}^*$ by introducing an intermediate stochastic bottleneck variable $z^*$ such that $q(\hy^* | x^*, \phi, \theta) = \int q(\hy^* | z^*, \phi, \theta)q(z^* | x^*, \theta)~dz^*$~\citep{alemi2016deep} as shown in Figure~\ref{fig:demo_ib}. Now, we would like to maximize $I(\hat{y}^*; \mathcal{D} | z^*, \theta)$ to prevent memorization. We can bound this mutual information by
\begin{align}
\textstyle
&I(\hat{y}^*; \mathcal{D} | z^*, \theta) \nonumber \\
\geq& I(x^*;\hat{y}^* |  \theta, z^*) = I(x^*; \hat{y}^* |  \theta) - I(x^*; z^* |  \theta) + I(x^*; z^*  | \hat{y}^*,  \theta) \nonumber  \\
\geq& I(x^*; \hat{y}^* | \theta) - I(x^*; z^* |  \theta)  \nonumber \\
=&  I(x^*; \hat{y}^* |   \theta)  - \E_{p(x^*)q(z^* | x^*, \theta)}  \left[ \log \frac{q(z^* | x^*, \theta)}{q(z^* | \theta)} \right] \nonumber  \\
\geq& I(x^*; \hat{y}^* |  \theta) - \E \left[ \log \frac{q(z^* | x^*, \theta)}{r(z^*)} \right] = I(x^*; \hat{y}^* |  \theta) - \E \left[ D_{\kl}(q(z^* | x^*, \theta) || r(z^*)) \right]
\label{eq:ib}
\end{align}

where $r(z^*)$ is a variational approximation to the marginal, the first inequality follows from the statistical dependencies in our model (see Figure~\ref{fig:demo_ib} and Appendix~\ref{app:meta_reg_act} for the proof). 
By simultaneously minimizing $\E \left[ D_{\kl}(q(z^* | x^*, \theta) || r(z^*)) \right]$ and  maximizing the mutual information $I(x^*;\hat{y}^* |  \theta)$, we can implicitly encourage the model to use the task training data $\mathcal{D}$.

For non-mutually-exclusive problems, the true label $y^*$ is dependent on $x^*$. If the model has the memorization problem and $I(x^*; \hat{y}^* |  \theta)=0$, then $q(\hat{y}^* | x^*, \theta, \data) = q(\hat{y}^* | x^*, \theta) = q(\hat{y}^* | \theta)$, which means the model predictions do not depend on $x^*$ or $\data$. %
Hence, in practical problems, the predictions generated from the model will have low accuracy.  

This suggests minimizing the training loss in Eq.~\eqref{eq:meta-learn} can increase $I(\hat{y}^*; \mathcal{D} | x^*, \theta)$ or $I(x^*; \hat{y}^* |  \theta)$. Replacing the maximization of $I(x^*; \hat{y}^* |  \theta)$ in Eq.~\eqref{eq:ib} with minimizing the training loss results in the following regularized training objective
\begin{equation}
\textstyle
   \frac{1}{N}\sum_i \E_{q(\theta | \metadata)q(\phi | \data_i, \theta)}\left[ -\frac{1}{K}\sum\limits_{(x^*, y^*) \in \data_i^*} \log q(\hy^* = y^* | x^*, \phi, \theta)  + \beta D_{\kl}(q(z^*|x^*, \theta)||r(z^*)) \right] %
   \label{eq:mr-a}
\end{equation}
where $\log q(\hy^* | x^*, \phi, \theta)$ is estimated by $\log q(\hy^* | z^*, \phi, \theta)$ with $z^* \sim q(z^* | x^*, \theta)$, $\beta$ modulates the regularizer and $r(z^*)$ can be set as $\cN(z^*;0,I)$. We refer to this regularizer as meta-regularization (MR) on the activations.

As we demonstrate in Section~\ref{sec:exp}, we find that this regularizer performs well, but in some cases can fail to prevent the memorization problem. Our hypothesis is that in these cases,
the network can sidestep the information constraint by storing the prediction of $y^*$ in a part of $z^*$, which incurs a small penalty in Eq.~\eqref{eq:mr-a} and small lower bound in Eq.~\eqref{eq:ib}.

\subsection{Meta Regularization on Weights}
\label{sec:main-weight}
Alternatively, we can penalize the task information stored in the meta-parameters $\theta$. Here, we provide an informal argument and provide the complete argument in Appendix~\ref{app:mr-weights}. Analogous to the supervised setting~\citep{achille2018emergence}, given meta-training dataset $\metadata$, we consider $\theta$ as random variable where the randomness can be introduced by training stochasticity. %
We model the stochasticity over $\theta$ with a Gaussian distribution $\cN(\theta ; \theta_\mu, \theta_\sigma)$ with learned mean and variance parameters per dimension~\citep{blundell2015weight, achille2018emergence}. By penalizing $I(y^*_{1:N}, \data_{1:N}; \theta | x^*_{1:N})$, we can limit the information about the training tasks stored in the meta-parameters $\theta$ and thus require the network to use the task training data to make accurate predictions. We can tractably upper bound it by 
\ba{
\textstyle
I(y^*_{1:N}, \data_{1:N}; \theta | x^*_{1:N})  = \E\left[ \log \frac{ q\left(\theta | \metadata\right)}{q(\theta | x^*_{1:N})} \right]
\leq \E\left[D_{\kl}\left(q(\theta|\metadata) \| r(\theta)\right)\right],
}
where $r(\theta)$ is a variational approximation to the marginal, which we set to $\cN(\theta; 0, I)$. 
In practice, we apply meta-regularization to the meta-parameters $\theta$ that are not used to adapt to the task training data and denote the other parameters as $\tilde{\theta}$. In this way, we control the complexity of the network that can predict the test labels without using task training data, but we do not limit the complexity of the network that processes the task training data. Our final meta-regularized objective can be written as
\begin{equation}
\textstyle
\frac{1}{N}\sum_i \E_{q(\theta ; \theta_\mu, \theta_\sigma)q(\phi | \data_i, \tilde\theta)}\left[ -\frac{1}{K}\sum\limits_{\small(x^*, y^*) \in \data_i^*} \log q(\hy^* = y^* | x^*, \phi, \theta, \tilde{\theta})  + \beta D_{\kl}(q(\theta;\theta_{\mu}, \theta_{\sigma}) || r(\theta))\right]
\label{eq:mr-w}
\end{equation}

For MAML, we apply meta-regularization to the parameters uninvolved in the task adaptation. For CNP, we apply meta-regularization to the encoder parameters. The detailed algorithms are shown in Algorithm~\ref{alg:MR-CNP} and \ref{alg:MR-MAML} in the appendix.

\subsection{Does Meta Regularization Lead to Better Generalization?}

Now that we have derived meta regularization approaches for mitigating the memorization problem, we theoretically analyze whether meta regularization leads to better generalization via a PAC-Bayes bound. In particular, we study meta regularization (MR) on the weights (W) of MAML, i.e. MR-MAML (W), as a case study.

Meta regularization on the weights of MAML uses a Gaussian distribution $\cN(\theta; \theta_\mu, \theta_\sigma)$ to model the stochasticity in the weights. Given a task and task training data, the expected error is given by
\begin{align}
    er(\theta_\mu, \theta_\sigma, \data, \task)=
    \mathbb{E}_{\theta\sim \cN(\theta; \theta_\mu, \theta_\sigma), \phi \sim q(\phi| \theta, \data), (x^*, y^*) \sim p(x, y | \task)} \left[ \cL(x^*, y^*, \phi)
    \right],
\end{align}
where the prediction loss $\cL(x^*, y^*, \phi_i)$ is bounded\footnote{In practice, $\cL(x^*, y^*, \phi_i)$ is MSE on a bounded target space or classification accuracy. We optimize the negative log-likelihood as a bound on the 0-1 loss.}.
Then, we would like to minimize the error on novel tasks
\begin{align}
     er(\theta_\mu, \theta_\sigma) = \mathbb{E}_{\task\sim p(\task), \data \sim p(x,y |\task)} \left[ er(\theta_\mu, \theta_\sigma, \data, \task) \right]
\end{align}
We only have a finite sample of training tasks, so computing $er(Q)$ is intractable, but we can form an empirical estimate:
\begin{align}
    &\hat{er}(\theta_\mu, \theta_\sigma, \data_1, \data_1^*, ..., \data_n, \data_n^*)  \notag \\
    = &\frac 1n \sum_{i=1}^n
    \underbrace{
    \mathbb{E}_{\theta\sim \cN(\theta; \theta_\mu, \theta_\sigma), \phi_i \sim q(\phi| \theta, \data_i)}
    \left[
    -\frac 1K \sum_{(x^*,y^*)\in \data_i^*} \log q(\hat{y}^* = y^* | x^*, \phi_i) 
    \right]
    }_{\hat{er}(\theta_\mu, \theta_\sigma, \data_i, \data_i^*)}
\end{align}
where for exposition we have assumed $|\data_i^*| = K$ are the same for all tasks. We would like to relate $er(\theta_\mu, \theta_\sigma)$ and $ \hat{er}(\theta_\mu, \theta_\sigma, \data_1, \data_1^*, ..., \data_n, \data_n^*)$, but the challenge is that $\theta_\mu$ and $\theta_\sigma$ are derived from the meta-training tasks $\data_1, \data_1^*, ..., \data_n, \data_n^*$. There are two sources of generalization error: (i) error due to the finite number of observed tasks and (ii) error due to the finite number of examples observed per task. Closely following the arguments in \citep{amit2018meta}, we apply a standard PAC-Bayes bound to each of these and combine the results with a union bound, resulting in the following Theorem.

\begin{theorem}
Let $P(\theta)$ be an arbitrary prior distribution over $\theta$ that does not depend on the meta-training data. Then for any $\delta \in (0,1]$, with probability at least $1-\delta$, the following inequality holds uniformly for all choices of $\theta_\mu$ and $\theta_\sigma$,
\small
\begin{align}
    er(\theta_\mu, \theta_\sigma) \leq &  \frac 1n \sum_{i=1}^n \hat{er}(\theta_\mu, \theta_\sigma, \data_i, \data_i^*) + \notag \\
    & \left( 
    \sqrt{\frac{1}{2(K-1)}} +\sqrt{\frac{1}{2(n-1)}} 
    \right)
    \sqrt{D_{KL}(\cN(\theta;\theta_\mu, \theta_\sigma) \|P) + \log \frac{n(K+1)}{\delta}},
\label{eq:thm2}
\end{align}
\normalsize
where $n$ is the number of meta-training tasks and $K$ is the number of per-task validation datapoints. 
\end{theorem}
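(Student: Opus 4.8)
The plan is to follow the two-level PAC-Bayes strategy of \citet{amit2018meta}, isolating the two sources of error — finitely many tasks and finitely many validation points per task — with a separate PAC-Bayes bound for each, and then stitching them together with a union bound. The single tool I will invoke is the standard McAllester-style PAC-Bayes inequality: for a loss bounded in $[0,1]$, an $m$-sample empirical risk, a data-independent prior $P$, and confidence $1-\delta'$, every posterior $Q$ simultaneously satisfies $\E_Q[\text{true risk}] \le \E_Q[\text{empirical risk}] + \sqrt{(D_{KL}(Q\|P)+\log(m/\delta'))/(2(m-1))}$. The crucial feature I will exploit is that this holds uniformly over all $Q$; taking $Q = \cN(\theta;\theta_\mu,\theta_\sigma)$ then delivers the required uniformity over all learned $(\theta_\mu,\theta_\sigma)$ for free, which is essential because these parameters are themselves fitted to the meta-training data.

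First I would handle the within-task error. Fix a task $\task_i$ with its training set $\data_i$; conditioned on these, the $K$ validation points in $\data_i^*$ are i.i.d. from $p(x,y\mid\task_i)$, and the per-sample loss, viewed as a function of $\theta$ (with the MAML adaptation $\phi\sim q(\phi\mid\theta,\data_i)$ folded in), is bounded. Applying the lemma with $m=K$, prior $P$, posterior $Q=\cN(\theta;\theta_\mu,\theta_\sigma)$ and confidence parameter $\delta_1$ gives, for each $i$, $er(\theta_\mu,\theta_\sigma,\data_i,\task_i) \le \hat{er}(\theta_\mu,\theta_\sigma,\data_i,\data_i^*) + \sqrt{(D_{KL}(Q\|P)+\log(K/\delta_1))/(2(K-1))}$. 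Next I would handle the across-task error with a second application at the level of tasks: the ``samples'' are the pairs $(\task_i,\data_i)$ drawn i.i.d. from the environment, the per-sample quantity is the true task risk $er(\theta_\mu,\theta_\sigma,\data_i,\task_i)$ (an expectation over $\theta\sim Q$ of a bounded per-task loss), and its environment average is exactly $er(\theta_\mu,\theta_\sigma)$. With $m=n$ and confidence $\delta_2$ this yields $er(\theta_\mu,\theta_\sigma) \le \frac1n\sum_i er(\theta_\mu,\theta_\sigma,\data_i,\task_i) + \sqrt{(D_{KL}(Q\|P)+\log(n/\delta_2))/(2(n-1))}$. Chaining the two — substituting the within-task bounds into the task-level bound — replaces the unobservable true per-task risks by the observable empirical risks $\hat{er}_i$.

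The remaining work is bookkeeping on the confidence budget, and this is where I would be most careful so that the two logarithmic terms collapse into the single factor $\log\frac{n(K+1)}{\delta}$ in the statement. The within-task bound is used $n$ times, contributing total failure probability $n\delta_1$, while the task-level bound contributes $\delta_2$; the union bound requires $n\delta_1 + \delta_2 \le \delta$. Choosing $\delta_2 = \delta/(K+1)$ and $\delta_1 = K\delta/(n(K+1))$ saturates this budget ($n\delta_1+\delta_2=\delta$) and, by direct computation, makes both $K/\delta_1$ and $n/\delta_2$ equal to $n(K+1)/\delta$, so both square roots share the common term $D_{KL}(Q\|P)+\log\frac{n(K+1)}{\delta}$. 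Factoring it out of the two roots produces the coefficient $\sqrt{\tfrac{1}{2(K-1)}}+\sqrt{\tfrac{1}{2(n-1)}}$ and finishes the argument.

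I expect the main obstacle to be conceptual rather than computational: correctly formulating the two levels so that each is a legitimate i.i.d. PAC-Bayes instance sharing the same posterior $Q$ and prior $P$. In particular I must take care that (i) the per-task loss handed to the task-level bound is the true distributional task risk, which is \emph{not} observed, so that the within-task bound is genuinely needed to convert it into $\hat{er}_i$; (ii) the MAML adaptation using $\data_i$ is absorbed into the hypothesis so that the validation points remain the only within-task randomness and are independent of $\data_i$; and (iii) the prior $P$ is fixed independently of all data, which licenses both applications of the lemma. The $\delta$-allocation that yields exactly $n(K+1)$ is a short calculation once the union-bound structure is in place.
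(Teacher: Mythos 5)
Your proof is correct and follows the same overall route as the paper: a task-level PAC-Bayes bound, a within-task PAC-Bayes bound, a union bound over the $n+1$ events, and exactly the paper's confidence allocation $\delta_0=\delta/(K+1)$, $\delta_i = K\delta/(n(K+1))$, which makes both logarithmic terms collapse to $\log\frac{n(K+1)}{\delta}$. The one place you diverge is the within-task step. The paper applies PAC-Bayes in the space of \emph{adapted} parameters: it uses the prior $\pi(\phi\mid\task_i) = \int P(\theta)\, q(\phi \mid \theta, \data_i)\, d\theta$ and restricts posteriors to $\rho(\phi) = \int Q(\theta)\, q(\phi \mid \theta, \data_i)\, d\theta$, and then needs the marginalization (data-processing) inequality $D_{KL}(\rho \,\|\, \pi) \le D_{KL}(Q \,\|\, P)$ to convert the KL over $\phi$ into the KL over $\theta$ appearing in the statement. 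You instead fold the adaptation $\phi \sim q(\phi \mid \theta, \data_i)$ into the loss, so the hypothesis handed to the within-task bound is $\theta$ itself and the term $D_{KL}(Q\|P)$ appears directly. Both instantiations are legitimate (conditioned on $\task_i$ and $\data_i$, the validation points are i.i.d.\ and your averaged loss is a bounded function of $\theta$), and both produce the identical inequality $er(Q,\data_i,\task_i) \le \hat{er}(Q,\data_i,\data_i^*) + \sqrt{\tfrac{1}{2(K-1)}\left(D_{KL}(Q\|P) + \log\tfrac{K}{\delta_i}\right)}$. Your variant is marginally more economical since it never invokes the KL inequality between induced marginals; the paper's variant makes explicit that what generalizes within a task is the distribution over adapted models. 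The chaining and bookkeeping you describe then match the paper's proof exactly.
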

\noindent We defer the proof to the Appendix \ref{sec:pac_appendix}. The key difference from the result in~\citep{amit2018meta} is that we leverage the fact that the task training data is split into training and validation. 

In practice, we set $P(\theta) = r(\theta) = \cN(\theta; 0,I)$. If we can achieve a low value for the bound, then with high probability, our test error will also be low. As shown in the Appendix \ref{sec:pac_appendix}, by a first order Taylor expansion of the the second term of the RHS in Eq.\eqref{eq:thm2} and setting the coefficient of the KL term as $\beta = \frac{\sqrt{1/2(K-1)} + \sqrt{1/2(n-1)}}{2\sqrt{\log n(K+1)/\delta}}$, we recover the MR-MAML(W) objective (Eq.\eqref{eq:mr-w}). $\beta$ trades-off between the tightness of the generalization bound and the probability that it holds true. The result of this bound suggests that the proposed meta-regularization on weights does indeed improve generalization on the meta-test set.

\section{Related Work}

Previous works have developed approaches for mitigating various forms of overfitting in meta-learning. These approaches aim to improve generalization in several ways: by reducing the number of parameters that are adapted in MAML~\citep{zintgraf2018cavia}, by compressing the task embedding~\citep{lee2019discrete}, through data augmentation from a GAN~\citep{zhang2018metagan}, by using an auxiliary objective on task gradients~\citep{guiroy2019towards}, and via an entropy regularization objective~\citep{jamal2019task}. These methods all focus on the setting with mutually-exclusive task distributions. We instead recognize and formalize the memorization problem, a particular form of overfitting that manifests itself with non-mutually-exclusive tasks, and offer a general and principled solution. Unlike prior methods, our approach is applicable to 
both contextual and gradient-based meta-learning methods. We additionally validate that prior regularization approaches, namely TAML~\citep{jamal2019task}, are not effective for addressing this problem setting.

Our derivation uses a Bayesian interpretation of meta-learning~\citep{tenenbaum, feifei,edwards2016towards,grant2018recasting,gordon2018meta,finn2018probabilistic,kim2018bayesian,harrison2018meta}. Some Bayesian meta-learning approaches place a distributional loss on the inferred task variables to constrain them to a prior distribution~\citep{garnelo2018neural,gordon2018meta,rakelly2019efficient}, which amounts to an information bottleneck on the latent \emph{task variables}. Similarly~\citet{zintgraf2018cavia,lee2019discrete,guiroy2019towards} aim to produce simpler or more compressed task adaptation processes. Our approach does the opposite, penalizing information from the \emph{inputs} and \emph{parameters},  to encourage the task-specific variables to contain greater information driven by the per-task data. 

We use PAC-Bayes theory to study the generalization error of meta-learning and meta-regularization. \citet{pentina2014pac} extends the single task PAC-Bayes bound \citep{mcallester1999pac} to the multi-task setting, which quantifies the gap between empirical error on training tasks and the expected error on new tasks. More recent research shows that, with tightened generalization bounds as the training objective, the algorithms can reduce the test error for mutually-exclusive tasks \citep{galanti2016theoretical, amit2018meta}. %
Our analysis is different from these prior works in that we only include pre-update meta parameters in the generalization bound rather than both pre-update and post-update parameters. In the derivation, we also explicitly consider the splitting of data into the task training set and task validation set, which is aligned with the practical setting.

The memorization problem differs from overfitting in conventional supervised learning in several aspects. First, memorization occurs at the task level rather than datapoint level and the model memorizes functions rather than labels. In particular, within a training task, the model can generalize to new datapoints, but it fails to generalize to new tasks. Second, the source of information for achieving generalization is different. For meta-learning the information is from both the meta-training data and new task training data but in standard supervised setting the information is only from training data. Finally, the aim of regularization is different. In the conventional supervised setting, regularization methods such as weight decay \citep{krogh1992simple}, dropout \citep{srivastava2014dropout}, the information bottleneck \citep{tishby2000information, tishby2015deep}, and Bayes-by-Backprop \citep{blundell2015weight} are used to balance the network complexity and the information in the data. 
The aim of meta-regularization is different. It governs the model complexity to avoid one complex model solving all tasks, while allowing the model's dependency on the task data to be complex. We further empirically validate this difference, finding that standard regularization techniques do not solve the memorization problem.

\section{Experiments}
\label{sec:exp}

In the experimental evaluation, we aim to answer the following questions: (1) How prevalent is the memorization problem across different algorithms and domains? (2) How does the memorization problem affect the performance of algorithms on non-mutually-exclusive task distributions? (3) Is our meta-regularization approach effective for mitigating the problem and is it compatible with multiple types of meta-learning algorithms? (4) Is the problem of memorization empirically distinct from that of the standard overfitting problem? 

To answer these questions, we propose several meta-learning problems involving non-mutually-exclusive task distributions, including two problems that are adapted from prior benchmarks with mutually-exclusive task distributions.  We consider model-agnostic meta-learning (MAML) and conditional neural processes (CNP) as representative meta-learning algorithms. We study both variants of our method in combination with MAML and CNP.
When comparing with meta-learning algorithms with and without meta-regularization, we use the same neural network architecture, while other hyperparameters are tuned via cross-validation per-problem.

\subsection{Sinusoid Regression}
\label{sec:sinusoid}

First, we consider a toy sinusoid regression problem that is non-mutually-exclusive. The data for each task is created in the following way: the amplitude $A$ of the sinusoid is uniformly sampled from a set of $20$ equally-spaced points $\{0.1, 0.3, \cdots,4\}$; $u$ is sampled uniformly from $[-5, 5]$ and $y$ is sampled from $\cN(A\sin(u), 0.1^2)$.  We provide both $u$ and the amplitude $A$ (as a one-hot vector) as input, i.e. $x=(u,A)$. %
At the test time, we expand the range of the tasks by randomly sampling the data-generating amplitude $A$ uniformly  from  $[0.1, 4]$ and use a random one-hot vector for the input to the network. The meta-training tasks are a proper subset of the meta-test tasks. 

Without the additional amplitude input, both MAML and CNP can easily solve the task and generalize to the meta-test tasks. However, once we add the additional amplitude input which indicates the task identity, we find that both MAML and CNP converge to the complete memorization solution and fail to generalize well to test data (Table \ref{tab:sinusoid} and Appendix Figures \ref{fig:sin_np} and \ref{fig:sin_maml}).  %
Both meta-regularized MAML and CNP (MR-MAML) and (MR-CNP) instead converge to a solution that adapts to the data, and as a result, greatly outperform the unregularized methods.

 \begin{table}[ht]
\small
 \caption{\footnotesize Test MSE for the non-mutually-exclusive sinusoid regression problem. We compare MAML and CNP against meta-regularized MAML (MR-MAML) and meta-regularized CNP (MR-CNP) where regularization is either on the activations (A) or the weights (W). We report the mean over 5 trials and the standard deviation in parentheses.
 }
 \centering
{
 \begin{tabular}{c@{\hskip5pt}cc@{\hskip5pt}c@{\hskip5pt}c@{\hskip5pt}c@{\hskip5pt}cc}
 \toprule
Methods & MAML &  \specialcell{MR-MAML (A)\\(ours)} & \specialcell{MR-MAML (W)\\(ours)}
 && CNP & \specialcell{MR-CNP (A)\\(ours)} & \specialcell{MR-CNP (W)\\(ours)} \\
 \midrule
 5 shot & 0.46 (0.04)  & \textbf{0.17 (0.03)} & \textbf{0.16 (0.04) }&& 0.91 (0.10) & \textbf{0.10 (0.01)} & \textbf{0.11 (0.02)}  \\
 10 shot & 0.13 (0.01) & \textbf{0.07 (0.02)} & \textbf{0.06 (0.01)} && 0.92 (0.05)& \textbf{0.09 (0.01)} & \textbf{0.09 (0.01)} \\
 \bottomrule
 \end{tabular}}%
 \label{tab:sinusoid}
\end{table}

\subsection{Pose Prediction}
\label{sec:pose}
To illustrate the memorization problem on a more realistic task, we create a multi-task regression dataset based on the Pascal 3D data \citep{xiang_wacv14} (See Appendix~\ref{app:data-pose} for a complete description). We randomly select 50 objects for meta-training and the other 15 objects for meta-testing. For each object, we use MuJoCo~\citep{todorov2012mujoco} to render images with random orientations of the instance on a table, visualized in Figure~\ref{fig:demo1}. For the meta-learning algorithm, the observation $(x)$ is the $128\times 128$ gray-scale image and the label $(y)$ is the orientation relative to a fixed canonical pose. Because the number of objects in the meta-training dataset is small, it is straightforward for a single network to memorize the canonical pose for each training object and to infer the orientation from the input image, thus achieving a low meta-training error without using $\data$. However, this solution performs poorly at the test time because it has not seen the novel objects and their canonical poses.

\noindent \textbf{Optimization modes and hyperparameter sensitivity.}
We choose the learning rate  from $\{0.0001,\allowbreak 0.0005,0.001\}$ for each method, $\beta$ from $\{10^{-6}, 10^{-5}, \cdots, 1\}$ for meta-regularization and report the results with the best hyperparameters (as measured on the meta-validation set) for each method. 
In this domain, we find that the convergence point of the meta-learning algorithm is determined by both the optimization landscape of the objective and the training dynamics, which vary due to stochastic gradients and the random initialization. In particular, we observe that there are two modes of the objective, one that corresponds to complete memorization and one that corresponds to successful adaptation to the task data.
As illustrated in the Appendix, we find that models that converge to a memorization solution have lower training error than solutions which use the task training data, indicating a clear need for meta-regularization.
When the meta-regularization is on the activations, the solution that the algorithms converge to depends on the learning rate, while MR on the weights consistently converges to the adaptation solution (See Appendix Figure~\ref{fig:sensitivity} for the sensitivity analysis). This suggests that MR on the activations is not always successful at preventing memorization.
Our hypothesis is that there exists a solution in which the bottlenecked activations encode only the prediction $y^*$, and discard other information. Such a solution can achieve both low training MSE and low regularization loss without using task training data, particularly if the predicted label contains a small number of bits (i.e., because the \emph{activations} will have low information complexity). However, note that this solution does not achieve low regularization error when applying MR to the weights because the \emph{function} needed to produce the predicted label does not have low information complexity.
As a result, meta-regularization on the weights does not suffer from this pathology and is robust to different learning rates. Therefore, we will use regularization on weights as the proposed methodology in the following experiments and algorithms in Appendix \ref{sec:alg}.

\noindent \textbf{Quantitative results.}
We compare MAML and CNP with their meta-regularized versions (Table~\ref{tab:pose}).  We additionally include fine-tuning as baseline, which trains a single network on all the instances jointly, and then fine-tunes on the task training data. Meta-learning with meta-regularization (on weights) outperforms all competing methods by a large margin.  We show test error as a function of the meta-regularization coefficient $\beta$ in Appendix Figure \ref{fig:pose_beta}. The curve reflects the trade-off when changing the amount of information contained in the weights. This indicates that $\beta$ gives a knob that allows us to tune the degree to which the model uses the data to adapt versus relying on the prior. 
\begin{figure}[htbp]
\centering
\begin{tabular}{cc}
\includegraphics[width=0.44\textwidth]{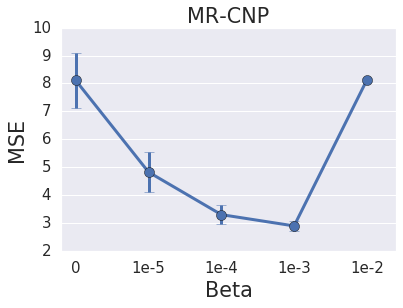} &
\includegraphics[width=0.44\textwidth]{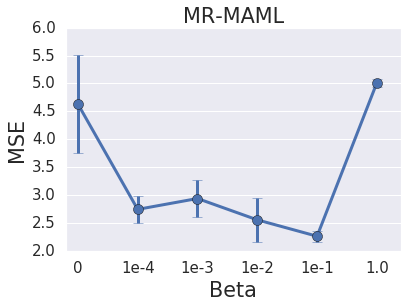} 
\end{tabular}
\caption{\small The performance of MAML and CNP with meta-regularization on the weights, as a function of the regularization strength $\beta$. We observe $\beta$ provides us a knob with which we can control the degree to which the algorithm adapts versus memorizes. When $\beta$ is small, we observe memorization, leading to large test error; when $\beta$ is too large, the network does not store enough information in the weights to perform the task. Crucially, in the middle of these two extremes, meta-regularization is effective in inducing adaptation, leading to good generalization. The plot shows the mean and standard deviation across $5$ meta-training runs.
}
\vspace{-0.2cm}
\label{fig:pose_beta}
\end{figure}

 \begin{table}[th]
 \caption{ \small Meta-test MSE for the pose prediction problem. We compare MR-MAML (ours) with conventional MAML and fine-tuning (FT). We report the average over 5 trials and standard deviation in parentheses.}
 \centering
 \vspace{3pt}
{
 \footnotesize
 \begin{tabular}{c@{\hskip10pt}c@{\hskip8pt}c@{\hskip8pt}c@{\hskip8pt}c@{\hskip8pt}c@{\hskip8pt}c}

 \toprule
Method & MAML & \specialcell{MR-MAML (W)\\(ours)}  &  CNP & \specialcell{MR-CNP (W)\\(ours)} & FT &  FT + Weight Decay \\
 \midrule
MSE & 5.39 (1.31) & \textbf{2.26 (0.09)} & 8.48 (0.12) & 2.89 (0.18)  & 7.33 (0.35) &  6.16 (0.12)\\
 \bottomrule
 \end{tabular}
}
  \label{tab:pose}
\end{table}

\noindent \textbf{Comparison to standard regularization.} We compare our meta-regularization with standard regularization techniques, weight decay \citep{krogh1992simple} and Bayes-by-Backprop \citep{blundell2015weight}, in Table~\ref{tab:pose2}. We observe that simply applying standard regularization to all the weights, as in conventional supervised learning, does not solve the memorization problem, which validates that the memorization problem differs from the standard overfitting problem.

\begin{table}[th]
 \caption{ \small Meta-testing MSE for the pose prediction problem. We compare MR-CNP (ours) with conventional CNP, CNP with weight decay, and CNP with Bayes-by-Backprop (BbB) regularization on all the weights. We report the average over 5 trials and standard deviation in parentheses.}
 \centering
 \vspace{3pt}
{\small
 \begin{tabular}{c@{\hskip15pt}c@{\hskip10pt}c@{\hskip10pt}c@{\hskip10pt}c}
 \toprule
Methods & CNP & CNP + Weight Decay  & CNP + BbB &  MR-CNP (W) (ours) \\
 \midrule
MSE & 8.48 (0.12) & 6.86 (0.27)  & 7.73 (0.82) &  \textbf{2.89 (0.18)}\\
 \bottomrule
 \end{tabular}
}
  \label{tab:pose2}
\end{table}

\subsection{Omniglot and MiniImagenet Classification}
\label{sec:classify}
Next, we study memorization in the few-shot classification problem by adapting the few-shot Omniglot~\citep{lake2011one} and MiniImagenet~\citep{ravi2016optimization,vinyals_matching} benchmarks to the non-mutually-exclusive setting.  In the \emph{non-mutually-exclusive} N-way K-shot classification problem, each class is (randomly) assigned a fixed classification label from 1 to N. For each task, we randomly select a corresponding class for each classification label and $K$ task training data points and $K$ task test data points from that class\footnote{We assume that the number of classes in the meta-training set is larger than $N$.}. This ensures that each class takes only one classification label across tasks and different tasks are non-mutually-exclusive (See Appendix~\ref{app:data-class} for details). %

We evaluate MAML, TAML~\citep{jamal2019task}, MR-MAML (ours), fine-tuning, and a nearest neighbor baseline on non-mutually-exclusive classification tasks (Table~\ref{tab:classification1}). We find that MR-MAML significantly outperforms previous methods on all of these tasks. To better understand the problem, for the MAML variants, we calculate the pre-update accuracy (before adaptation on the task training data) on the meta-training data in Appendix Table~\ref{tab:classification2}. The high pre-update meta-training accuracy and low meta-test accuracy are evidence of the memorization problem for MAML and TAML, indicating that it is learning a model that ignores the task data. In contrast, MR-MAML successfully controls the pre-update accuracy to be near chance and encourages the learner to use the task training data to achieve low meta-training error, resulting in good performance at meta-test time.  

Finally, we verify that meta-regularization does not degrade performance on the standard mutually-exclusive task. We evaluate performance as a function of regularization strength on the standard 20-way 1-shot Omniglot task (Appendix Figure~\ref{fig:me_omniglot}), and we find that small values of $\beta$ lead to slight improvements over MAML. This indicates that meta-regularization substantially improves performance in the non-mutually-exclusive setting without degrading performance in other settings.

\begin{table}[th]
\small
 \caption{\footnotesize Meta-test accuracy on non-mutually-exclusive (NME) classification. The fine-tuning and nearest-neighbor baseline results for MiniImagenet are from \citep{ravi2016optimization}.
 }
\setlength{\tabcolsep}{4pt}
\begin{minipage}[t]{0.48\linewidth}\centering
\resizebox{\textwidth}{!}{%
    \begin{tabular}[t]{ccc}
    \toprule
    \bigstrut \emph{NME Omniglot} & 20-way 1-shot & 20-way 5-shot  \\
    \midrule
    \bigstrut MAML &7.8 (0.2)$\%$ & 50.7 (22.9)$\%$\\
    \midrule
    \bigstrut TAML~\citep{jamal2019task} &9.6 (2.3)$\%$  & 67.9 (2.3)$\%$\\
    \midrule
    \bigstrut MR-MAML (W) (ours) & \textbf{83.3 (0.8)$\%$} & \textbf{94.1 (0.1)$\%$} \\
     \bottomrule
     \end{tabular}
 }
 \end{minipage}
\hspace{0.02\linewidth}
\begin{minipage}[t]{.49\linewidth}\centering
\resizebox{\textwidth}{!}{%
    \begin{tabular}[t]{ccc}
    \toprule
     \bigstrut \emph{NME MiniImagenet} & 5-way 1-shot & 5-way 5-shot \\
    \midrule
    \bigstrut  Fine-tuning &28.9 (0.5))$\%$& 49.8 (0.8))$\%$\\
    \midrule
    \bigstrut  Nearest-neighbor &41.1 (0.7)$\%$ &51.0 (0.7) $\%$\\
    \midrule
     \bigstrut MAML &26.3 (0.7)$\%$ & 41.6 (2.6)$\%$\\
    \midrule
     \bigstrut TAML~\citep{jamal2019task} &26.1 (0.6)$\%$  & 44.2 (1.7)$\%$\\
    \midrule
     \bigstrut MR-MAML (W) (ours) & \textbf{43.6 (0.6)$\%$}  & \textbf{53.8 (0.9)$\%$} \\
     \bottomrule
     \end{tabular}
}
\end{minipage}
\label{tab:classification1}
\end{table}

\section{Conclusion and Discussion}

Meta-learning has achieved remarkable success in few-shot learning problems. However, we identify a pitfall of current algorithms: the need to create task distributions that are mutually exclusive. This requirement restricts the domains that meta-learning can be applied to. We formalize the failure mode, i.e. the memorization problem, that results from training on non-mutually-exclusive tasks and distinguish it as a function-level overfitting problem compared to the the standard label-level overfitting in supervised learning.

We illustrate the memorization problem with different meta-learning algorithms on a number of domains. 
To address the problem, we propose an algorithm-agnostic meta-regularization (MR) approach that leverages an information-theoretic perspective of the problem. The key idea is that by placing a soft restriction on the information flow from meta-parameters in prediction of test set labels, we can encourage the meta-learner to use task training data during meta-training. We achieve this by successfully controlling the complexity of model prior to the task adaptation.

The memorization issue is quite broad and is likely to occur in a wide range of real-world applications, for example, personalized speech recognition systems, learning robots that can adapt to different environments~\citep{nagabandi2018learning}, and learning  goal-conditioned manipulation skills using trial-and-error data. Further, this challenge may also be prevalent in other conditional prediction problems, beyond meta-learning, an interesting direction for future study. By both recognizing the challenge of memorization and developing a general and lightweight approach for solving it, we believe that this work represents an important step towards making meta-learning algorithms applicable to and effective on any problem domain.

\section*{Acknowledgement}
The authors would like to thank Alexander A. Alemi, Kevin Murphy, Luke Metz, Abhishek Kumar and the anonymous reviewers for helpful discussions and feedback. M. Yin and M. Zhou acknowledge the support of the U.S. National Science Foundation under Grant IIS-1812699.

\bibliography{references}
\bibliographystyle{iclr2020_conference}

\appendix
\section{Appendix}
\subsection{Algorithm}
\label{sec:alg}
We present the detailed algorithm for meta-regularization on weights with conditional neural processes (CNP) in Algorithm~\ref{alg:MR-CNP} and with model-agnostic meta-learning (MAML) in Algorithm~\ref{alg:MR-MAML}. For CNP, we add the regularization on the weights $\theta$ of encoder and leave other weights $\tilde{\theta}$ unrestricted. For MAML, we similarly regularize the weights $\theta$ from input to an intermediate hidden layer and leave the weights $\tilde{\theta}$ for adaptation unregularized. In this way, we restrict the complexity of the pre-adaptation model not the post-adaptation model.

\begin{algorithm}[H]
\SetKwData{Left}{left}\SetKwData{This}{this}\SetKwData{Up}{up}
\SetKwFunction{Union}{Union}\SetKwFunction{FindCompress}{FindCompress}
\SetKwInOut{Input}{input}\SetKwInOut{Output}{output}
\Input{
Task distribution $p(\task)$; Encoder weights distribution $q(\theta;\tau) = \cN(\theta;\tau)$ with Gaussian parameters $\tau = (\theta_{\mu}, \theta_{\sigma})$; Prior distribution $r(\theta)$ and Lagrangian multiplier $\beta$; $\tilde{\theta}$ that parameterizes feature extractor $h_{\tilde{\theta}}(\cdot)$ and decoder $T_{\tilde{\theta}}(\cdot)$. Stepsize $\alpha$.
}
\Output{Network parameter $\tau$, $\tilde{\theta}$.}
\BlankLine
Initialize $\tau$, $\tilde{\theta}$ randomly\;

\While{not converged}{
Sample a mini-batch of $\{\task_i\}$ from $p(\task)$\;
Sample $\theta \sim q(\theta;\tau)$ with reparameterization \;
    \For{all $\task_i \in \{\task_i\}$}
    {   
        Sample $\data_i = (\xv_i, \yv_i)$, $\data^*_i = (\xv^*_i, \yv^*_i)$ from $\task_i$ \;
        Encode observation $\zv_i = g_{\theta}(\xv_i)$, $\zv^*_i = g_{\theta}(\xv^*_i)$ \;
        Compute task context $\phi_i = a(h_{\tilde{\theta}}(\zv_i, \yv_i))$ with aggregator $a(\cdot)$\;
    }
Update $\tilde{\theta} \leftarrow \tilde{\theta} + \alpha \nabla_{\tilde{\theta}} \sum_{\task_i} \log q(\yv^*_i | T_{\tilde{\theta}}(\zv^*_i, \phi_i))$ \;
Update $\tau \leftarrow \tau + \alpha \nabla_\tau [\sum_{\task_i}  \log q(\yv^*_i | T_{\tilde{\theta}}(\zv^*_i, \phi_i)) - \beta D_{\kl}(q(\theta; \tau) || r(\theta))]$
}
\caption{Meta-Regularized CNP}
\label{alg:MR-CNP}
\end{algorithm}

\begin{algorithm}[H]
\SetKwData{Left}{left}\SetKwData{This}{this}\SetKwData{Up}{up}
\SetKwFunction{Union}{Union}\SetKwFunction{FindCompress}{FindCompress}
\SetKwInOut{Input}{input}\SetKwInOut{Output}{output}
\Input{
Task distribution $p(\task)$; Weights distribution $q(\theta;\tau) = \cN(\theta;\tau)$ with Gaussian parameters $\tau= (\theta_{\mu}, \theta_{\sigma})$; Prior distribution $r(\theta)$ and Lagrangian multiplier $\beta$; Stepsize $\alpha, \alpha'$.
}
\Output{Network parameter $\tau$, $\tilde{\theta}$.}
\BlankLine
Initialize $\tau$, $\tilde{\theta}$ randomly\;
\While{not converged}{
Sample a mini-batch of $\{\task_i\}$ from $p(\task)$\;
Sample $\theta \sim q(\theta;\tau)$ with reparameterization \;
    \For{all $\task_i \in \{\task_i\}$}
    {   
        Sample $\data_i = (\xv_i, \yv_i)$, $\data^*_i = (\xv^*_i, \yv^*_i)$ from $\task_i$ \;
        Encode observation $\zv_i = g_{\theta}(\xv_i)$, $\zv^*_i = g_{\theta}(\xv^*_i)$ \;
        Compute task specific parameter $\phi_i = \tilde{\theta} + \alpha' \nabla_{\tilde{\theta}} \log q(\yv_i | \zv_i, \tilde{\theta})$ \;
    }
Update $\tilde{\theta} \leftarrow \tilde{\theta} + \alpha \nabla_{\tilde{\theta}} \sum_{\task_i} \log q(\yv^*_i | \zv^*_i, \phi_i)$ \;
Update $\tau \leftarrow \tau + \alpha \nabla_\tau [\sum_{\task_i}  \log q(\yv^*_i | \zv^*_i, \phi_i) - \beta D_{\kl}(q(\theta; \tau) || r(\theta))]$
}
\caption{Meta-Regularized MAML}
\label{alg:MR-MAML}
\end{algorithm}

\begin{algorithm}[htbp]
\SetKwData{Left}{left}\SetKwData{This}{this}\SetKwData{Up}{up}
\SetKwFunction{Union}{Union}\SetKwFunction{FindCompress}{FindCompress}
\SetKwInOut{Input}{input}\SetKwInOut{Output}{output}
\Input{
Meta-testing task $\task$ with training data $\data = (\xv, \yv)$ and testing input $\xv^*$, optimized parameters $\tau, \tilde{\theta}$.
}
\Output{Prediction $\hat{y}^*$}
\BlankLine
    \For{$k$ from 1 to $K$}
    {   
        Sample $\theta_k \sim q(\theta;\tau)$\;
        Encode observation $\zv_k = g_{\theta_k}(\xv)$, $\zv^*_k = g_{\theta_k}(\xv^*)$ \;
        Compute task specific parameter
        $\phi_k = a(h_{\tilde{\theta}}(\zv_k, \yv))$ for MR-CNP and 
        $\phi_k = \tilde{\theta} + \alpha' \nabla_{\tilde{\theta}} \log q(\yv | \zv_k, \tilde{\theta})$  for MR-MAML\;
        Predict $\hat{y}_k^* \sim q(\hat{y}^* | z_k^*, \phi_k, \tilde{\theta})$
    }
Return prediction $\hat{y}^* = \frac{1}{K} \sum_{k=1}^K \hat{y}_k^*$
\caption{ Meta-Regularized Methods in Meta-testing} 
\end{algorithm}
\subsection{Meta Regularization on Activations}
\label{app:meta_reg_act}
We show that $I(x^*; \hat{y}^* |  z^*, \theta) \leq I(\hat{y}^*; \mathcal{D} | z^*, \theta)$.  By Figure~\ref{fig:demo_ib}, we have that $I(\hat{y}^*;x^* | \theta, \mathcal{D}, z^*) = 0$. By the chain rule of mutual information we have 
\ba{
I(\hat{y}^*; \mathcal{D} | z^*, \theta) = & I(\hat{y}^*; \mathcal{D} | z^*, \theta) + I(\hat{y}^*; x^* |\mathcal{D}, \theta,  z^*) \notag \\
= & I(\hat{y}^*; x^*, \mathcal{D} | \theta,  z^*) \notag \\
= & I(x^*;\hat{y}^* |  \theta, z^*) + I(\hat{y}^*; \data | x^*, \theta, z^*) \notag \\
\geq & I(x^*;\hat{y}^* |  \theta, z^*)
}

\subsection{Meta Regularization on Weights}
\label{app:mr-weights}
Similar to~\citep{achille2018emergence}, we use $\xi$ to denote the unknown parameters of the true data generating distribution. This defines a joint distribution $p(\xi, \metadata, \theta) = p(\xi)p(\metadata | \xi)q(\theta | \metadata)$. Furthermore, we have a predictive distribution $q(\hy^* | x^*,  \data, \theta) = \E_{\phi | \theta, \data}\left[ q(\hy^* | x^*, \phi, \theta) \right]$.

The meta-training loss in Eq.~\ref{eq:meta-learn} is an upper bound for the cross entropy $H_{p, q}(y^*_{1:N} | x_{1:N}^*, \data_{1:N}, \theta)$. 
Using an information decomposition of cross entropy~\citep{achille2018emergence}, we have
\begin{align}
    &H_{p, q}(y^*_{1:N} | x_{1:N}^*, \data_{1:N}, \theta)= H(y^*_{1:N} | x^*_{1:N}, \data_{1:N}, \xi) + I(\xi; y^*_{1:N} | x^*_{1:N}, \data_{1:N}, \theta) \notag  \\
    &~~~~~~~~~~~ +\E \left[ D_{\kl}(p(y^*_{1:N} | x^*_{1:N}, \data_{1:N},\theta) || q(y^*_{1:N} | x^*_{1:N},\data_{1:N}, \theta)) \right] + I(\data_{1:N}; \theta | x_{1:N}^*, \xi) \notag  
    \\
    &~~~~~~~~~~~ - I(y^*_{1:N},  \data_{1:N} ; \theta | x^*_{1:N}, \xi).
\end{align}

Here the only negative term is the $I(y^*_{1:N},  \data_{1:N} ; \theta | x^*_{1:N}, \xi)$, which quantifies the information that the meta-parameters contain about the meta-training data beyond what can be inferred from the data generating parameters (i.e., memorization). Without proper regularization, the cross entropy loss can be minimized by maximizing this term.  We can control its value by upper bounding it
\bas{
I(y^*_{1:N}, \data_{1:N}; \theta | x^*_{1:N}, \xi) &= \E\left[ \log \frac{ q(\theta | \metadata, \xi)}{q(\theta | x^*_{1:N}, \xi)} \right] \\
&= \E\left[ \log \frac{ q(\theta | \metadata)}{q(\theta | x^*_{1:N}, \xi)} \right] \\
&= \E\left[D_{\kl}(q(\theta|\metadata) || q(\theta | x^*_{1:N}, \xi))\right] \\
&\leq \E\left[ D_{\kl}(q(\theta|\metadata) || r(\theta)) \right],
}
where the second equality follows because $\theta$ and $\xi$ are conditionally independent given $\metadata$. This gives the regularization in Section~\ref{sec:main-weight}.

\subsection{Proof of the PAC-Bayes Generalization Bound}
\label{sec:pac_appendix}
First, we prove a more general result and then specialize it. The goal of the meta-learner is to extract information about the meta-training tasks and the test task training data to serve as a prior for test examples from the novel task.
This information will be in terms of a distribution $Q$ over possible models.
When learning a new task, the meta-learner uses the training task data $\data$ and a model parameterized by $\theta$ (sampled from $Q(\theta)$) and outputs a distribution $q(\phi | \data, \theta)$ over models. Our goal is to learn $Q$ such that it performs well on novel tasks.

To formalize this, define 
\begin{align}
    er(Q, \data, \task)=
    \mathbb{E}_{\theta\sim Q(\theta), \phi \sim q(\phi| \theta, \data), (x^*, y^*) \sim p(x, y | \task)} \left[ \cL(\phi(x^*), y^*)
    \right]
\end{align}
where $\cL(\phi(x^*), y^*)$ is a bounded loss in $[0, 1]$.
Then, we would like to minimize the error on novel tasks
\begin{align}
    er(Q) = \min_Q \mathbb{E}_{\task\sim p(\task), \data \sim p(x,y |\task)} \left[ er(Q, \data, \task) \right]
\end{align} 
Because we only have a finite training set, computing $er(Q)$ is intractable, but we can form an empirical estimate:
\begin{align}
    \hat{er}(Q, \data_1, \data_1^*, ..., \data_n, \data_n^*) = \frac 1n \sum_{i=1}^n
    \underbrace{
    \mathbb{E}_{\theta\sim Q(\theta), \phi_i \sim q(\phi| \theta, \data_i)}
    \left[
    \frac 1K \sum_{(x^*,y^*)\in \data_i^*} \cL(\phi(x^*), y^*)) 
    \right]
    }_{\hat{er}(Q, \data_i, \data_i^*)}
\end{align}
where for exposition we assume $K = |\data_i^*|$ is the same for all $i$. We would like to relate $er(Q)$ and $ \hat{er}(Q, \data_1, \data_1^*, ..., \data_n, \data_n^*)$, but the challenge is that $Q$ may depend on $\data_1, \data_1^*, ..., \data_n, \data_n^*$ due to the learning algorithm. There are two sources of generalization error: (i) error due to the finite number of observed tasks and (ii) error due to the finite number of examples observed per task. Closely following the arguments in \citep{amit2018meta}, we apply a standard PAC-Bayes bound to each of these and combine the results with a union bound. 

\begin{theorem*}
Let $Q(\theta)$ be a distribution over parameters $\theta$ and let $P(\theta)$ be a prior distribution. Then for any $\delta \in (0,1]$, with probability at least $1- \delta$, the following inequality holds uniformly for all distributions $Q$,
\begin{align}
    er(Q) \leq  \frac 1n \sum_{i=1}^n \hat{er}(Q, \data_i, \data_i^*) + \left( 
    \sqrt{\frac{1}{2(K-1)}} +\sqrt{\frac{1}{2(n-1)}} 
    \right)
    \sqrt{D_{KL}(Q\|P) + \log \frac{n(K+1)}{\delta}}
\end{align}
\end{theorem*}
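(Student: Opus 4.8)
The plan is to bound the two distinct sources of generalization error separately and then glue them together with a union bound. I would first insert the intermediate quantity $\frac1n\sum_{i=1}^n er(Q,\data_i,\task_i)$ (the average \emph{true} per-task error on the observed tasks) and split the total gap as
\begin{align*}
er(Q) - \frac1n\sum_{i=1}^n \hat{er}(Q,\data_i,\data_i^*)
&= \Big[er(Q) - \frac1n\sum_{i=1}^n er(Q,\data_i,\task_i)\Big] \\
&\quad + \Big[\frac1n\sum_{i=1}^n er(Q,\data_i,\task_i) - \frac1n\sum_{i=1}^n \hat{er}(Q,\data_i,\data_i^*)\Big].
\end{align*}
The first bracket is the \emph{environment-level} error arising from observing only $n$ tasks; the second is the average \emph{within-task} error arising from estimating each per-task error with only $K$ held-out points. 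The crux is that each bracket is exactly the gap controlled by an ordinary single-level PAC-Bayes bound, with $Q(\theta)$ and $P(\theta)$ playing the role of posterior and prior in \emph{both} applications.

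For the basic tool I would invoke the McAllester/Maurer PAC-Bayes inequality for a $[0,1]$-bounded loss over $m$ i.i.d.\ samples, in the form: with probability $\ge 1-\eta$, uniformly over all posteriors $Q$, the expected loss is at most the empirical loss plus $\sqrt{\tfrac{1}{2(m-1)}}\sqrt{D_{KL}(Q\|P)+\log\frac{m+1}{\eta}}$; this is the source of the two radical coefficients $\sqrt{1/2(K-1)}$ and $\sqrt{1/2(n-1)}$ and of the $K+1$ constant. I would apply it at the environment level with the $n$ i.i.d.\ tasks $(\task_i,\data_i)$ as the sample and the composite map $\theta\mapsto \mathbb{E}_{\phi\sim q(\phi|\theta,\data),(x^*,y^*)}[\cL(\phi(x^*),y^*)]$ as the ``per-example'' loss of hypothesis $\theta$ (bounded, with task-expectation equal to $er(Q)$), controlling the first bracket via the $m=n$ instance. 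Then, for each fixed task $i$, I would apply the same inequality with the $K$ validation points of $\data_i^*$ as the i.i.d.\ sample and the adaptation-composed loss $\theta\mapsto \mathbb{E}_{\phi\sim q(\phi|\theta,\data_i)}[\cL(\phi(x^*),y^*)]$ as the per-example loss; since $\data_i^*$ is drawn independently of $\data_i$ and $P$ is data-free, this is legitimate, and averaging over $i$ controls the second bracket via the $m=K$ instance. This held-out split is precisely the ingredient that differs from \citet{amit2018meta}: the within-task empirical error is evaluated on fresh validation data while adaptation consumes $\data_i$.

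Finally I would make the bounds simultaneous. Allocating the failure probability $\delta$ across the one environment-level bound and the $n$ within-task bounds and taking a union bound makes both brackets hold uniformly in $Q$ with probability $\ge 1-\delta$; absorbing the resulting (slightly different) logarithmic constants into the common value $\log\frac{n(K+1)}{\delta}$ and collecting the two square roots into the single factor $\big(\sqrt{1/2(K-1)}+\sqrt{1/2(n-1)}\big)\sqrt{D_{KL}(Q\|P)+\log\frac{n(K+1)}{\delta}}$ yields the claim. The main-text specialization then follows immediately by taking $Q=\cN(\theta;\theta_\mu,\theta_\sigma)$ and $P=\cN(\theta;0,I)$ and using the negative log-likelihood as the bounded surrogate loss.

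I would expect the main obstacle to be the bookkeeping that forces both PAC-Bayes applications to share the \emph{same} posterior $Q$ and prior $P$: the environment-level loss is an expectation over unseen test points that we never observe, so the argument only closes because the within-task bound replaces each unobservable $er(Q,\data_i,\task_i)$ by the computable $\hat{er}(Q,\data_i,\data_i^*)$, and because PAC-Bayes certifies a guarantee \emph{uniform over $Q$} despite $Q$ being selected after seeing all the data. Checking the independence and measurability conditions (validation points fresh relative to adaptation data, tasks i.i.d., prior data-free) and the exact constant-chasing in the union bound are routine by comparison.
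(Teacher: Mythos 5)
Your proposal is correct, and its overall architecture coincides with the paper's proof: the same decomposition into environment-level error (finitely many tasks) and within-task error (finite validation set per task), the same classical PAC-Bayes bound applied once at the task level with the $(\task_i,\data_i)$ as samples and once per task with $\data_i^*$ as samples, and the same union bound with the failure probability split so that both logarithmic terms collapse to $\log\frac{n(K+1)}{\delta}$. The one place you genuinely deviate is the within-task step. The paper applies PAC-Bayes over the \emph{adapted} parameters $\phi$: it takes the prior $\pi(\phi\,|\,\task_i)=\int P(\theta)q(\phi\,|\,\theta,\data_i)\,d\theta$, restricts posteriors to pushforwards $\rho(\phi)=\int Q(\theta)q(\phi\,|\,\theta,\data_i)\,d\theta$, and then invokes the marginalization (data-processing) inequality $D_{KL}(\rho\|\pi)\le D_{KL}(Q\|P)$ to convert the KL over $\phi$-distributions into the KL over $\theta$-distributions. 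You instead keep the hypothesis space as $\Theta$ and fold adaptation into the loss, $l_i(\theta,(x^*,y^*))=\E_{\phi\sim q(\phi|\theta,\data_i)}\left[\cL(\phi(x^*),y^*)\right]$, which is still $[0,1]$-bounded, so PAC-Bayes applies directly with posterior $Q$ and prior $P$ and produces $D_{KL}(Q\|P)$ with no extra lemma; by Fubini the resulting population and empirical quantities are exactly $er(Q,\data_i,\task_i)$ and $\hat{er}(Q,\data_i,\data_i^*)$, so the two routes yield the identical bound. Yours is marginally more self-contained, while the paper's makes explicit that the object being certified per task is the induced posterior over adapted parameters. One bookkeeping caveat: in the paper the $K+1$ comes from the union-bound allocation $\delta_0=\delta/(K+1)$, $\delta_i=K\delta/(n(K+1))$ combined with a base theorem carrying $\log(M/\delta)$, not from an $m+1$ inside the base theorem; with the $\log((m+1)/\eta)$ form you quote, the final constant comes out slightly larger than $\log\frac{n(K+1)}{\delta}$, so you should either adopt the paper's form of the classical bound or accept a marginally weaker constant.
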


\begin{proof}
To start, we state a classical PAC-Bayes bound and use it to derive generalization bounds on task and datapoint level generalization, respectively.

\begin{theorem}
Let $\mathcal{X}$ be a sample space (i.e. a space of possible datapoints). Let $P(X)$ be a distribution over $\mathcal{X}$ (i.e. a data distribution). Let $\Theta$ be a hypothesis space. Given a ``loss function'' $l(\theta, X) : \Theta \times \mathcal{X} \rightarrow [0,1]$ and a collection of $M$ i.i.d. random variables sampled from $P(X)$, $X_1, ..., X_M$, let $\pi$ be a prior distribution over hypotheses in  $\Theta$ that does not depend on the samples but may depend on the data distribution $P(X)$. 
Then, for any $\delta\in(0, 1]$, the following bound holds uniformly for all posterior distributions $\rho$ over $\Theta$
\small
\begin{align}
    P\Big(
    \mathbb{E}_{X_i\sim P(X), \theta \sim \rho(\cdot)} \left[ l(\theta, X_i)\right]
    \leq & \frac 1M \sum_{m=1}^M \mathbb{E}_{\theta\sim\rho(\cdot)} [l(\theta, X_m]  
     + \sqrt{\frac{1}{2(M-1)} \left( D_{KL}(\rho\|\pi) + \log \frac M\delta \right)}, \forall \rho
    \Big) \notag \\
    \geq& 1-\delta.
\end{align}
\normalsize
\end{theorem}

\noindent \textbf{Meta-level generalization}
First, we bound the task-level generalization, that is we relate $er(Q)$ to $\frac 1n \sum_{i=1}^n er(Q, \data_i, \task_i)$. Letting the samples be $X_i = (\data_i, \task_i)$, and $l(\theta, X_n) = \mathbb{E}_{\phi_i \sim q(\phi|\data_i,\theta), (x^*, y^*) \sim \task_i} [\cL(\phi(x^*), y^*)]$, then Theorem 1 says that for any $\delta_0\sim(0,1]$
\begin{align}
    P \left(
    er(Q) \leq \frac 1n \sum_{i=1}^n er(Q, \data_i, \task_i)
    + \sqrt{\frac{1}{2(n-1)}
    \left(
    D_{KL}(Q\|P) + \log \frac{n}{\delta_0}
    \right)
    },
    \forall Q
    \right) 
    \geq 1-\delta_0,
\label{eq:pac1}
\end{align}
where $P$ is a prior over $\theta$. 

\noindent \textbf{Within task generalization}
Next, we relate $er(Q, \data_i, \task_i)$ to $\hat{er}(Q, \data_i,\data_i^*)$ via the PAC-Bayes bound. For a fixed task $i$, task training data $\data_i$, a prior $\pi(\phi|\task_i)$ that only depends on the training data, and any $\delta_i \in (0, 1]$, we have that
\begin{align*}
P\Big( \E_{ (x^*, y^*) \sim p(x, y | \task_i) \rho(\phi_i)} \left[ \cL(\phi_i(x^*), y^*) \right] \leq & \E_{\rho(\phi_i)}\left[ \frac 1K \sum_{(x^*,y^*)\in \data_i^*} \cL(\phi_i(x^*), y^*) \right] \\
&+ \sqrt{\frac{1}{2(K-1)} \left(D_{KL}(\rho || \pi) + \log \frac{K}{\delta_i} \right)}, \forall \rho \Big) \geq 1 - \delta_i. 
\end{align*}
Now, we choose $\pi( \phi | \task_i)$ to be $\int P(\theta)q(
\phi | \theta, \data_i )d\theta$ and restrict $\rho(\phi)$ to be of the form $\int Q(\theta)q(
\phi | \theta, \data_i )d\theta$ for any $Q$. While, $\pi$ and $\rho$ may be complicated distributions (especially, if they are defined implicitly), we know that with this choice of $\pi$ and $\rho$, $D_{KL}(\rho || \pi) \leq D_{KL}(Q || P)$~\citep{cover2012elements}, hence, we have
\begin{align}
    P \left(
    er(Q, \data_i, \task_i) \leq \hat{er}(Q, \data_i, \data_i^*) + \sqrt{\frac{1}{2(K-1)} \left( D_{KL}(Q\|P) + \log \frac {K}{\delta_i}
    \right)
    }, \forall Q
    \right)
    \geq 1-\delta_i
\label{eq:pac2}
\end{align}

\noindent \textbf{Overall bound on meta-learner generalization}
Combining Eq.~\eqref{eq:pac1} and \eqref{eq:pac2} using the union bound, we have
\begin{align}
    P\Big(
    er(Q) \leq & \frac 1n \sum_{i=1}^n \hat{er}(Q, \data_i, \data_i^*) + \sqrt{\frac{1}{2(K-1)}D_{KL}(Q\|P) + \log \frac{K}{\delta_i}} \notag \\ 
    +&  \sqrt{\frac{1}{2(n-1)}D_{KL}(Q\|P) + \log \frac{n}{\delta_0}}, \forall Q
    \Big)
    \geq 1-\left(\sum_i \delta_i + \delta_0\right)
\end{align}

Choosing $\delta_0 = \frac{\delta}{K+1}$ and $\delta_i = \frac{K\delta}{n(K+1)}$, then we have:
\small
\begin{align}
\textstyle
    P\Big(
    er(Q) \leq & \frac 1n \sum_{i=1}^n \hat{er}(Q, \data_i, \data_i^*)
    + \left( 
    \sqrt{\frac{1}{2(K-1)}} +\sqrt{\frac{1}{2(n-1)}} 
    \right)
    \sqrt{D_{KL}(Q\|P) + \log \frac{n(K+1)}{\delta}}, \forall Q
    \Big) \notag \\
    \geq& 1-\delta.
\end{align}
\normalsize
\end{proof}

Because $n$ is generally large, by Taylor expansion of the complexity term we have
\begin{align*}
&\left(\sqrt{\frac{1}{2(K-1)}} + \sqrt{\frac{1}{2(n-1)}}\right)\sqrt{\left(D_{KL}Q || P) + \log \frac{n(K+1)}{\delta}\right)} \\
& = \frac{1}{2\sqrt{\log n(K+1)/\delta}} \left(\sqrt{\frac{1}{2(K-1)}} + \sqrt{\frac{1}{2(n-1)}}\right)\left( D_{KL}Q || P) + 2\log(\frac{n(K+1)}{\delta}) \right) + o(1)
\end{align*}
Re-defining the coefficient of KL term as $\beta$ and omitting the constant and higher order term, we recover the meta-regularization bound in Eq.\eqref{eq:mr-w} when $Q(\theta) = \cN(\theta; \theta_\mu, \theta_\sigma)$.

\subsection{Experimental Details}
\subsubsection{Pose Prediction}
\label{app:data-pose}
We create a multi-task regression dataset based on the Pascal 3D data \citep{xiang_wacv14}. The dataset consists of 10 classes of 3D object such as ``aeroplane'', ``sofa'', ``TV monitor'', etc. Each class has multiple different objects and there are 65 objects in total. We randomly select 50 objects for meta-training and the other 15 objects for meta-testing. For each object, we use MuJoCo~\citep{todorov2012mujoco} to render 100 images with random orientations of the instance on a table, visualized in Figure~\ref{fig:demo1}. For the meta-learning algorithm, the observation $(x)$ is the $128\times 128$ gray-scale image and the label $(y)$ is the orientation re-scaled to be within $[0, 10]$. For each task, we randomly sample 30 $(x, y)$ pairs for an object and evenly split them between task training and task test data. We use a meta batch-size of 10 tasks per iteration.

For MR-CNP, we use a convolutional encoder with a fully connected bottom layer to map the input image to a 20-dimensional latent representation $z$ and $z^*$ for  task training input $x$ and test input $x^*$ respectively. The $(z, y)$ are concatenated and mapped by the feature extractor and aggregator which are fully connected networks to the 200 dimensional task summary statistics $\phi$. The decoder is a fully connected network that maps $(\phi, z^*)$ to the prediction $\hy^*$. 

For MR-MAML, we use a convolutional encoder to map the input image to a $14\times14$ dimensional latent representation $z$ and $z^*$. The pairs $(z, y)$ are used in the task adaptation step to get a task specific parameter $\phi$ via gradient descent. Then $z^*$ is mapped to the prediction $\hy^*$ with a convolutional predictor parameterized by $\phi$. The network is trained using 5 gradient steps with learning rate $0.01$ in the inner loop for adaptation and evaluated using 20 gradient steps at the test-time.

\subsubsection{Non-mutually-exclusive Classification}
\label{app:data-class}
The Omniglot dataset consists of 20 instances of 1623 characters from 50 different alphabets. We randomly choose 1200 characters for meta-training and use the remaining for testing. The meta-training characters are partitioned into 60 disjoint sets for 20-way classification. The MiniImagenet dataset contains 100 classes of images including 64 training classes, 12 validation classes, and 24 test classes.  We randomly partition the 64 meta-training classes into 13 disjoint sets for 5-way classification with one label having one less class of images than the others.

For MR-MAML we use a convolutional encoder similar to the pose prediction problem. The dimension of $z$ and $z^*$ is $14\times14$ for Omniglot and $20\times20$ for MiniImagenet. We use a convolutional decoder for both datasets. Following \citep{finn2017model}, we use a meta batch-size of 16 for 20-way Omniglot classification and meta batch-size of 4 for  5-way MiniImagenet classification. The meta-learning rate is chosen from $\{0.001, 0.005\}$ and the $\beta$ for meta-regularized methods are chosen from $\{10^{-7}, 10^{-6}, \ldots, 10^{-3}\}$. The optimal hyperparameters are chosen for each method separately via cross-validation.

\subsection{Additional Illustration and Graphical Model}
We  show a standard few-shot classification setup in meta-learning to illustrate a mutually-exclusive task distribution and a graphical model for the regularization on the activations.

\begin{figure}[H]
\centering
\includegraphics[width=0.9\textwidth]{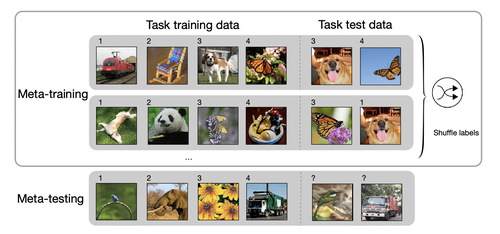}
\caption{An example of \emph{mutually-exclusive} task distributions. In each task of mutually-exclusive few-shot classification, different classes are randomly assigned to the $N$-way classification labels. The same class, such as the dog and butterfly in this illustration, can be assigned different labels across tasks which makes it impossible for one model to solve all tasks simultaneously.}
\label{fig:demo_me}
\end{figure}

\begin{figure}[H]
\centering
\includegraphics[width=0.35\textwidth]{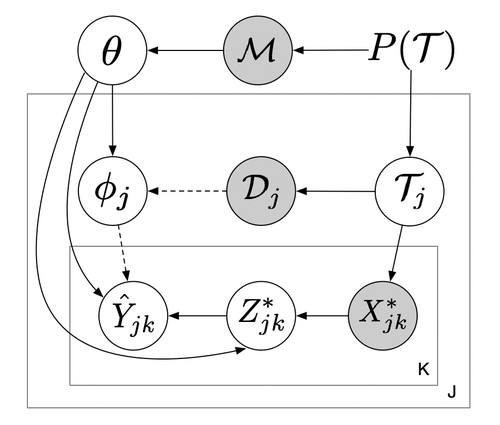}
\caption{Graphical model of the regularization on activations. Observed variables are shaded and $Z$ is bottleneck variable. The complete memorization corresponds to the graph without the dashed arrows.}
\label{fig:demo_ib}
\end{figure}

\subsection{Additional Results}

As shown in Figures~\ref{fig:sin_maml_main},~\ref{fig:sin_np} and \ref{fig:sin_maml}, when meta-learning algorithms converge to the memorization solution, the test tasks must be similar to the train tasks in order to achieve low test error. For CNP, although the task training set contains sufficient information to infer the correct amplitude, this information is ignored and the regression curve at test-time is determined by the one-hot vector. As a result, CNP can only generalize to points from the curves it has seen in the training (Figure~\ref{fig:sin_np} first row). On the other hand, MAML does use the task training data (Figure \ref{fig:sin_maml_main}, \ref{fig:sin_maml} and Table \ref{tab:sinusoid}), however, its performance is much worse than in the mutually-exclusive task. MR-MAML and MR-CNP avoid converging to a memorization solution and achieve excellent test performance on sinusoid task. 

\begin{figure}[htbp]
\begin{tabular}{cc}
\includegraphics[width=0.43\textwidth]{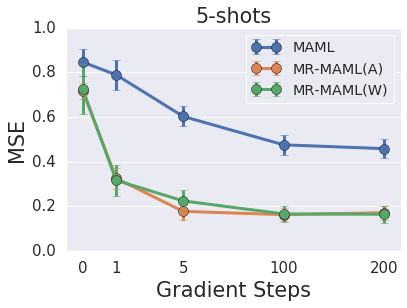} &
\includegraphics[width=0.43\textwidth]{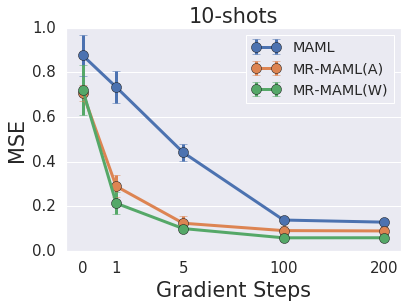} 
\end{tabular}
\caption{\small Test MSE on the mutually-non-exclusive sinusoid problem as function of the number of gradient steps used in the inner loop of MAML and MR-MAML. For each trial, we calculate the mean MSE over 100 randomly generated meta-testing tasks. We report the mean and standard deviation over 5 random trials. 
}
\label{fig:sin_maml_main}
\end{figure}

\begin{figure}[htbp]
\begin{tabular}{cccc}
(a) CNP & (b) MR-CNP (W) & (c) MAML & (d) MR-MAML (W)
\\
\includegraphics[width=0.22\textwidth]{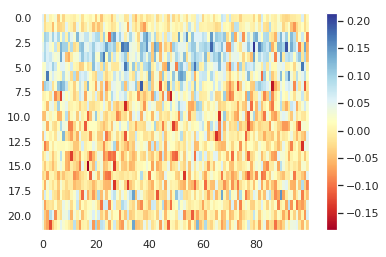} &
\includegraphics[width=0.22\textwidth]{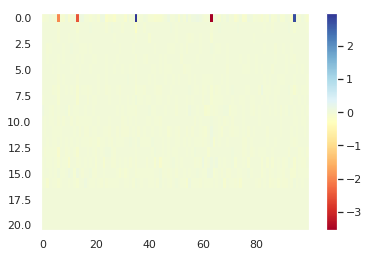} &
\includegraphics[width=0.22\textwidth]{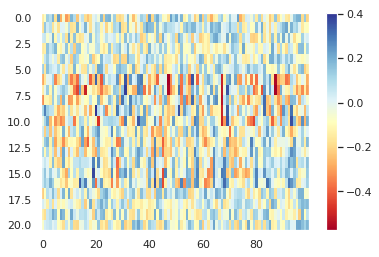} &
\includegraphics[width=0.22\textwidth]{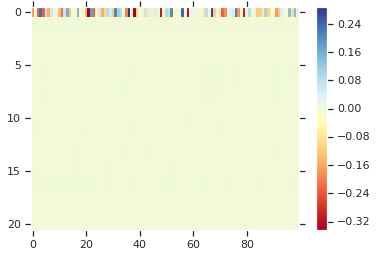}
\end{tabular}
\caption{ \small Visualization of the optimized weight matrix $W$ that is connected to the inputs in the sinusoid regression example. The input $x = (u,A)$ where $u \sim \text{Unif}(-5,5)$, $A$ is 20 dimensional one-hot vector and the intermediate layer is 100 dimensional, hence $x \in \bR^{21}$ and $W \in \bR^{21\times 100}$. For both CNP and MAML, the meta-regularization restricts the part of weights that is connected to $A$ close to $0$. Therefore it avoids storing the amplitude information in weights and forces the amplitude to be inferred from the task training data $\data$, hence preventing the memorization problem.
}
\label{fig:sin_weights}
\end{figure}

\begin{figure}[htbp]
\begin{tabular}{c}
(a) CNP \\
\includegraphics[width=0.95\textwidth]{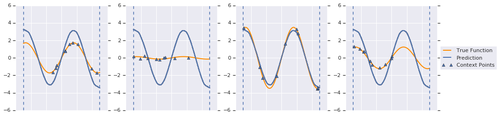}\\
(b) MR-CNP (A) \\
\includegraphics[width=0.95\textwidth]{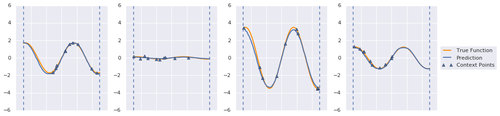} \\
(c) MR-CNP (W) \\
\includegraphics[width=0.95\textwidth]{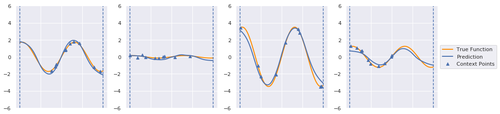}
\end{tabular}
\caption{\small Meta-test results on the non-mutually-exclusive sinusoid regression problem with CNP. For each row, the amplitudes of the true curves (orange) are randomly sampled uniformly from $[0.1, 4]$. For illustrative purposes, we fix the one-hot vector component of the input.
(a): The vanilla CNP cannot adapt to new task training data at test-time and the shape of prediction curve (blue) is determined by the one-hot amplitude not the task training data. (b)~(c): Adding meta-regularization on both activation and weights enables the CNP to use the task training data at meta-training and causes the model to generalize well at test-time.
}
\label{fig:sin_np}
\end{figure}

\begin{figure}[htbp]
\begin{tabular}{c}
(a) MAML \\
\includegraphics[width=0.95\textwidth]{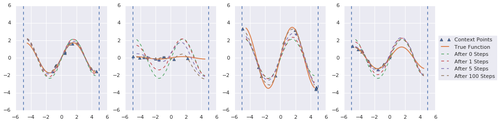}\\
(b) MR-MAML (A) \\
\includegraphics[width=0.95\textwidth]{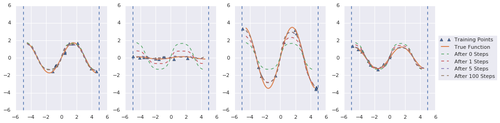} \\
(c) MR-MAML (W) \\
\includegraphics[width=0.95\textwidth]{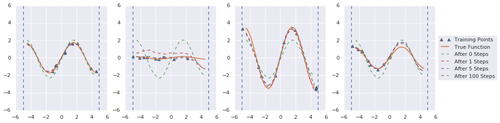}
\end{tabular}
\caption{\small Meta-test results on the non-mutually-exclusive sinusoid regression problem with MAML. For each row, the true amplitudes of the true curves (orange) are randomly sampled uniformly from $[0.1, 4]$. For illustrative purposes, we fix the one-hot vector component of the input. 
(a): Due to memorization, MAML adapts slowly and has large generalization error at test-time. (b)~(c): Adding meta-regularization on both activation and weights recovers efficient adaptation.
}
\label{fig:sin_maml}
\end{figure}

\begin{figure}[htbp]
\begin{tabular}{cccc}
 MR-CNP (A)& MR-CNP (A) & MR-CNP (W) & MR-CNP (W) \\
\includegraphics[width=0.22\textwidth]{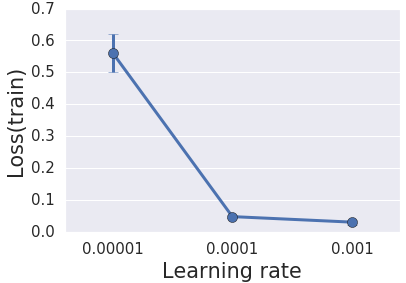} &
\includegraphics[width=0.22\textwidth]{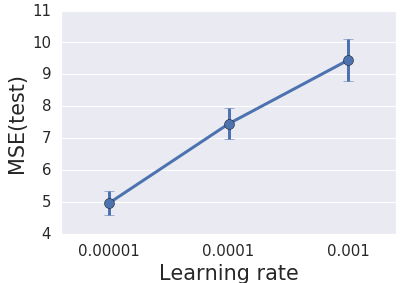} &
\includegraphics[width=0.22\textwidth]{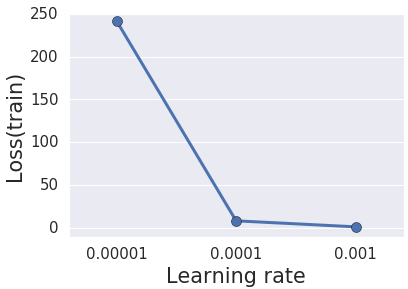} &
\includegraphics[width=0.22\textwidth]{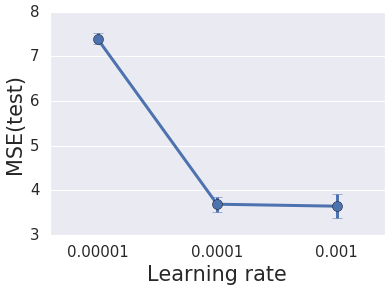}
\end{tabular}
\caption{ \small  Sensitivity of activation regularization and weight regularization with respect to the learning rate on the pose prediction problem. For activation regularization, lower training loss corresponds to higher test MSE which indicates that the memorization solution is not solved. For weights regularization, lower training loss corresponds to lower test MSE which indicates proper training can converge to the adaptation solution.
}
\label{fig:sensitivity}
\end{figure}

\newpage

In Table~\ref{tab:classification2}, we report the pre-update accuracy for the non-mutually-exclusive classification experiment in Section~\ref{sec:classify}. The pre-update accuracy is obtained by the initial parameters $\theta$ rather than the task adapted parameters $\phi$. At the meta-training time, for both MAML and MR-MAML the post-update accuracy obtained by using $\phi$ gets close to 1. High pre-update accuracy reflects the memorization problem. For example, in 20-way 1-shot Omniglot example, the pre-update accuracy for MAML is $99.2\%$ at the training time, which means only $0.8\%$ improvement in accuracy is due to adaptation, so the task training data is ignored to a large extent. The pre-update training accuracy for MR-MAML is $5\%$, which means $95\%$ improvement in accuracy during training is due to the adaptation. This explains why in Table 4, the test accuracy of MR-MAML is much higher than that of MAML at the test-time, since the task training data is used to achieve fast adaptation.

\begin{table}[htbp]
 \caption{  \small Meta-training \emph{pre-update} accuracy on non-mutually-exclusive classification. MR-MAML controls the meta-training pre-update accuracy close to random guess and achieves low training error after adaptation.
 }
\begin{minipage}{0.48\linewidth}\centering
\resizebox{\textwidth}{!}{%
    \begin{tabular}{ccc}
    \toprule
    \bigstrut \emph{NME Omniglot} & 20-way 1-shot & 20-way 5-shot  \\
    \midrule
    \bigstrut MAML &99.2 (0.2)$\%$ & 45.1 (38.9)$\%$\\
    \midrule
    \bigstrut TAML &68.9(43.1)$\%$  & 6.7 (1.8)$\%$\\
    \midrule
    \bigstrut MR-MAML (ours) &\textbf{5.0 (0)$\%$}  & \textbf{5.0 (0)$\%$}  \\
     \bottomrule
     \end{tabular}
 }
 \end{minipage}
\hspace{0.02\linewidth}
\begin{minipage}{0.49\linewidth}\centering
\resizebox{\textwidth}{!}{%
    \begin{tabular}{ccc}
    \toprule
     \bigstrut \emph{NME MiniImagenet} & 5-way 1-shot & 5-way 5-shot \\
    \midrule
     \bigstrut MAML & 99.4 (0.1)$\%$ & 21.0(1.2)$\%$\\
    \midrule
     \bigstrut TAML &99.4 (0.1)$\%$  & 20.8(0.4)$\%$\\
    \midrule
     \bigstrut MR-MAML (ours) & \textbf{20.0(0)$\%$ } & \textbf{20.2(0.1)$\%$ }\\
     \bottomrule
     \end{tabular}
}
\end{minipage}
\label{tab:classification2}
\end{table}

\begin{figure}[htbp]
\centering
\includegraphics[width=0.5\textwidth]{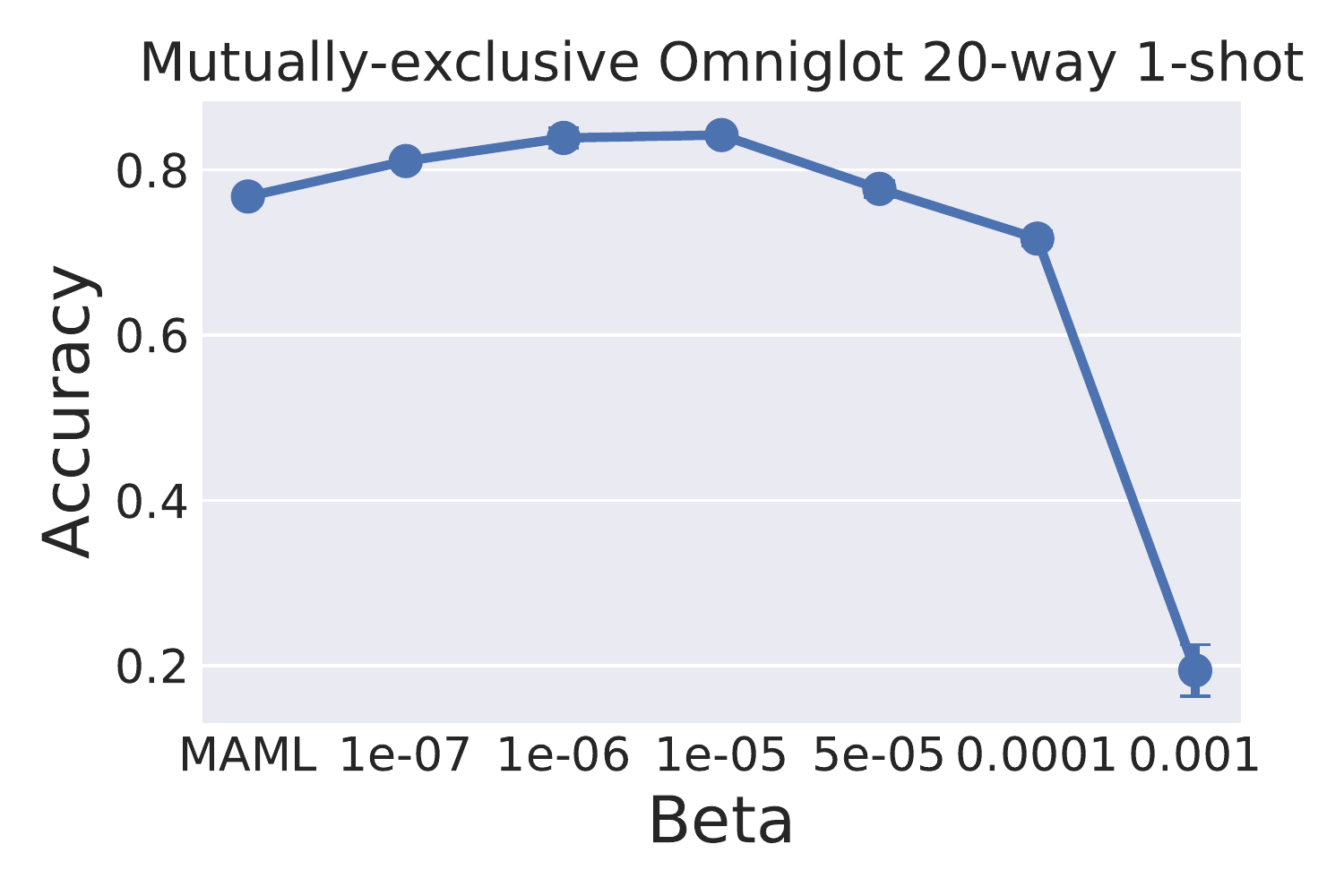} 
\caption{\small The test accuracy of MAML with meta-regularization on the weights as a function of the regularization strength $\beta$ on the mutually-exclusive 20-way 1-shot Omniglot problem. The plot shows the mean and standard deviation across $5$ meta-training runs. When $\beta$ is small, MR-MAML slightly outperforms MAML, indicating that meta-regularization does not degrade performance on mutually-exclusive tasks. The accuracy numbers are not directly comparable to previous work (e.g.,~\citep{finn2017model}) because we do not use data augmentation.
}

\label{fig:me_omniglot}
\end{figure}

\end{document}